\def\BibTeX{{\rm B\kern-.05em{\sc i\kern-.025em b}\kern-.08em
    T\kern-.1667em\lower.7ex\hbox{E}\kern-.125emX}}
\let\MYcaption\@makecaption
\let\@makecaption\MYcaption
\newcommand{\vx}{\mathbf{x}}    
\newcommand{\vs}{\mathbf{s}}    
\newcommand{\vu}{\mathbf{u}}    
\newcommand{\seqX}{\mathbf{X}}    
\newcommand{\seqU}{\mathbf{U}}    
\newcommand{\norm}[1]{\left\lVert #1 \right\rVert}
\newcommand{\RR}{\mathbb{R}}   
\newcommand{\sX}{\mathcal{X}}   
\newcommand{\sU}{\mathcal{U}}   
\newcommand{\sM}{\mathcal{M}}   
\newcommand{\qsystem}[1]{\textit{#1}}
\newcommand{\vf}{\mathbf{f}}    
\newcommand{\vg}{\mathbf{g}}    
\DeclareMathOperator{\step}{step}
\newcommand{\theGx}{2.5}    
\newcommand{\theGy}{-1.2}    
\newcommand{\theGdeltax}{2.5}    
\newcommand{\theGdeltay}{-1.05}    
\newcommand{\tObs}{\fill[gray, opacity=0.5] (1, .5) rectangle (1.9, 1.6); \fill[gray, opacity=0.5] (1.2, -.8) rectangle (2.2, -.2);}
\newcommand{\tHead}{-{Latex[length=1.5mm]}}
\crefname{equation}{}{} 
\newtheorem{theorem}{Theorem}
\newtheorem{definition}{Definition}
\newtheorem{remark}{Remark}
\newtheorem{assumption}{Assumption}
\newtheorem{example}{Example}
\title{iDb-A*: Iterative Search and Optimization for Optimal Kinodynamic Motion Planning}
\author{Joaquim Ortiz-Haro, Wolfgang H\"onig, Valentin N. Hartmann, and Marc Toussaint
  \thanks{ \urlstyle{tt}
  Supplementary video: {\footnotesize \url{https://youtu.be/GoznOEWbUb8}}. Website: {\footnotesize\url{https://quimortiz.github.io/idbastar/}}. Code is available at {\footnotesize \url{https://github.com/quimortiz/dynoplan}}.}
  \thanks{All authors are with Faculty of Electrical Engineering and Computer Science, Technical University of Berlin, Berlin, Germany.}%
\thanks{This research has been supported by the German Research
Foundation (DFG) under Germany's Excellence Strategy –
EXC 2002/1–390523135 \emph{Science of Intelligence}, grant 448549715, and -- EXC 2120/1–390831618, as well as the
German-Israeli Foundation for Scientific Research (GIF), grant I-1491-407.6/2019.}%
}
\newcommand{\ALGdb}{\texttt{Db-A$^{*}$}\,}
\newcommand{\ALGidbas}{\texttt{iDb-A$^{*}$}\,}
\newcommand{\ALGsst}{\texttt{SST$^{*}$}\,}
\newcommand{\ALGrrt}{\texttt{RRT$^{*}$-TO}\,}
\begin{document}

\maketitle
\thispagestyle{empty}
\pagestyle{empty}

\begin{abstract}
	Motion planning for robotic systems with complex dynamics is a challenging problem.
	While recent sampling-based algorithms achieve asymptotic optimality by propagating random control inputs, their empirical convergence rate is often poor, especially in high-dimensional systems such as multirotors.
	An alternative approach is to first plan with a simplified geometric model and then use trajectory optimization to follow the reference path while accounting for the true dynamics.
	However, this approach
	may fail to produce a valid trajectory if the initial guess is not close to a dynamically feasible trajectory.
	In this paper, we present Iterative Discontinuity Bounded A* (iDb-A*), a novel kinodynamic motion planner that combines search and optimization iteratively.
	The search step utilizes a finite set of short trajectories (motion primitives) that are interconnected while allowing for a bounded discontinuity between them.
	The optimization step locally repairs the discontinuities with trajectory optimization.
	By progressively reducing the allowed discontinuity and incorporating more motion primitives, our algorithm achieves asymptotic optimality with excellent any-time performance.
	We provide a benchmark of 43 problems across eight different dynamical systems, including different versions of unicycles and multirotors.
	Compared to state-of-the-art methods, iDb-A* consistently solves more problem instances and finds lower-cost solutions more rapidly.
\end{abstract}

\begin{IEEEkeywords}
	Kinodynamic Motion Planning, Trajectory Optimization
\end{IEEEkeywords}

\section{Introduction}

\begin{figure}
	\centering
	\begin{subfigure}[t]{.23\textwidth}
		\centering
		\includegraphics[width=.9\linewidth]{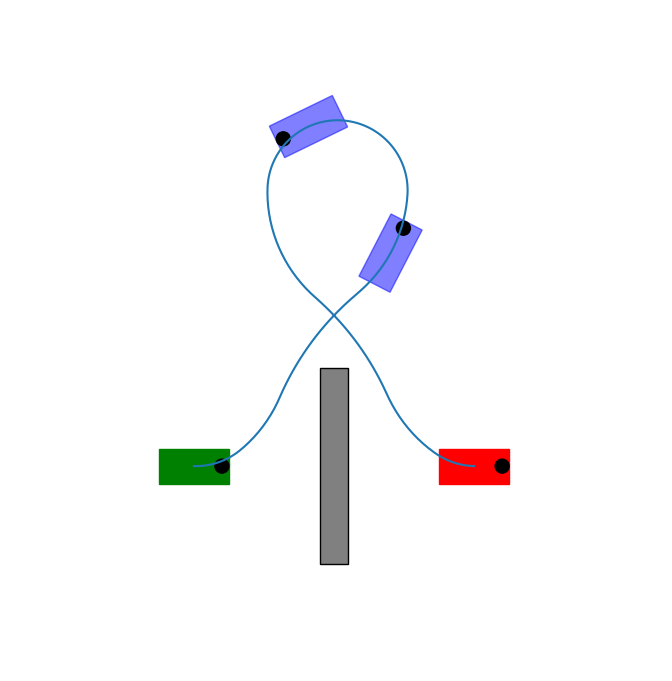}
		\caption{\label{fig:overview:uni1}}
	\end{subfigure}%
	\begin{subfigure}[t]{.23\textwidth}
		\centering
		\includegraphics[width=.9\linewidth]{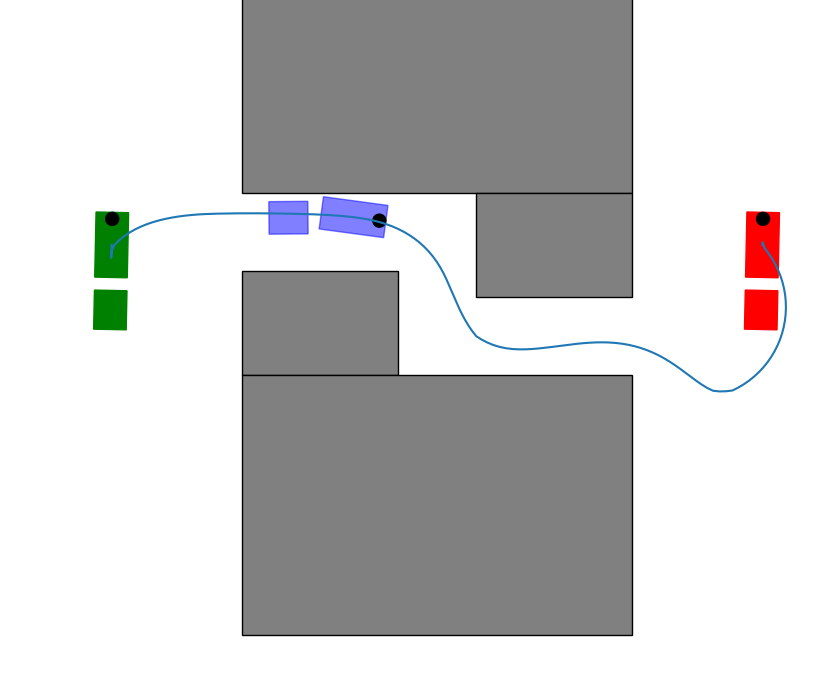}
		\caption{\label{fig:overview:car}}
	\end{subfigure}
	\begin{subfigure}[t]{.23\textwidth}
		\centering
		\includegraphics[width=.9\linewidth]{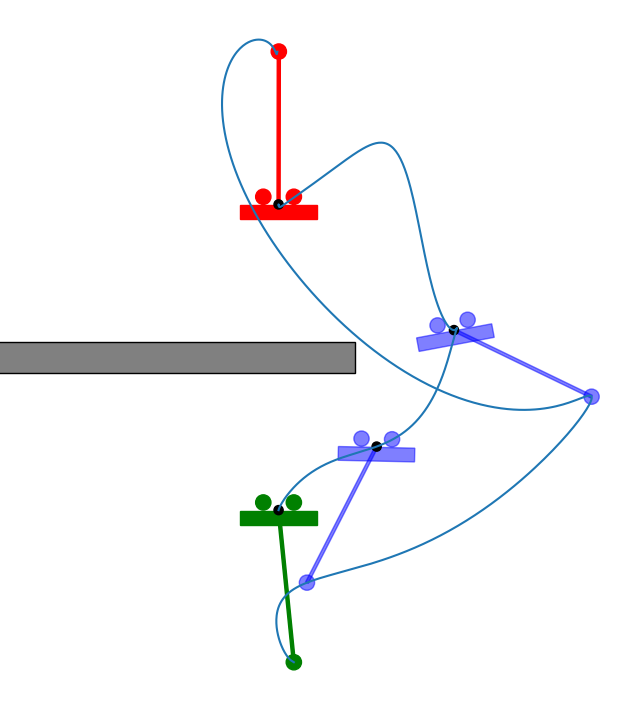}
		\caption{\label{fig:overview:pole}}
	\end{subfigure}%
	\begin{subfigure}[t]{.23\textwidth}
		\centering
		\includegraphics[width=.9\linewidth]{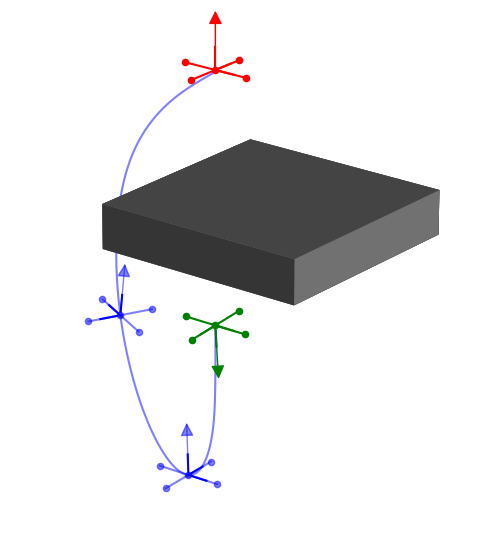}
		\caption{\label{fig:overview:3d_quad}}
	\end{subfigure}
	\caption{Examples of kinodynamic motion planning problems.
		Start and goal configurations are shown in green and red, respectively, while gray boxes represent obstacles.
		We display some trajectories found by our algorithm, iDb-A*, in blue.
		(a) A unicycle with asymmetric angular speed bounds and a positive minimum velocity. (b) A car pulling a trailer. (c) Acrobatics with a planar multirotor with an underactuated pendulum. (d) A recovery flight with a quadrotor with a very limited thrust-to-weight ratio.}
	\label{fig:intro}
\end{figure}

\IEEEPARstart{K}{inodynamic} motion planning for robots remains a challenging task, particularly when the objective is to compute time-optimal plans. \cref{fig:intro} showcases four interesting problems: obstacle-free recovery motions from unstable configurations with low-power quadcopters (\cref{fig:overview:3d_quad}), swing-up motions with acrobatic pole-copters among obstacles (\cref{fig:overview:pole}), maneuvering an Ackermann steering car with a trailer through a tight corridor (\cref{fig:overview:car}), and a unicycle with asymmetric angular speed bounds and a positive minimum velocity (\cref{fig:overview:uni1}).
Together, these examples illuminate the key challenges in kinodynamic motion planning:
\emph{i)} the diversity of the intended robotic systems, \emph{ii)} the nonlinearity of the dynamics, and \emph{iii)} nonconvex configuration spaces with narrow passages among obstacles.

Current planning approaches are sampling-based, search-based, optimization-based, or hybrid.
Each of these methods has its strengths and weaknesses.
Sampling-based planners \cite{SSTstar, DIRT, lavalle2001randomized}
can find initial solutions quickly and have strong guarantees for asymptotic convergence to an optimal solution in theory.
In practice, the initial solutions are far from optimal; the convergence rate is low, and the solutions typically require post-processing.

Search-based approaches \cite{pivtoraiko2011kinodynamic, likhachev2009planning} can remedy some of those shortcomings by connecting precomputed trajectories, so-called \emph{motion primitives}, using A* or related graph search algorithms.
Yet, the theoretical guarantees of A* only hold to the selected discretization of the state space and the precomputed motions.
Moreover, scaling this approach to higher dimensions or general systems requires careful, frequently hand-crafted design of the motion primitives, which requires domain-specific knowledge and is impractical for many dynamical systems.

Optimization-based planners \cite{TrajOpt, GuSTO, KOMO, li2004iterative} scale polynomially rather than exponentially with the number of state dimensions, which makes them better suited for high-dimensional planning problems.
However, these planners are, in general, only locally optimal and thus require a good initial guess both for the trajectory and the time horizon.

Thus, optimization approaches are typically combined with a sampling-based planner that generates an initial path with a simplified version of the dynamics, e.g., a geometric planner that avoids obstacles.
This strategy is not guaranteed to produce valid motion plans and requires in-depth knowledge of the dynamical system to choose an informative, yet simple enough, dynamics model.
For instance, a simple linear position model can be used to plan quadcopter motions around the hovering state but fails to plan trajectories that recover from upside-down configurations.

Our main contribution is \underline{I}terative \underline{D}iscontinuity \underline{B}ounded \underline{A*} (iDb-A*), a novel kinodynamic motion planner that combines a search algorithm, \underline{D}iscontinuity-\underline{B}ounded \underline{A*} (Db-A*), and trajectory optimization in an \underline{I}terative fashion.

iDb-A* combines key ideas and strengths of the previously introduced methods.
We rely on a graph search with short trajectories that are connected with bounded discontinuity because it provides a theoretically grounded exploration-exploitation trade-off.
We avoid predefined discretization and instead use a set of randomized motions similar to sampling-based planning.

Introducing discontinuity when connecting primitives makes the search tractable: we can reuse the primitives and have a finite number of states to expand.
While the output trajectory of the search algorithm is not feasible, it can be used as an initial guess for trajectory optimization that locally repairs the discontinuous trajectory into a valid one.
We execute search and optimization iteratively, where the value of the discontinuity bound decreases with each iteration, and the number of primitives increases.
For large discontinuity bounds, the search is fast, but the optimizer might fail to find a valid solution.
For small bounds, the search requires a longer runtime, but the optimizer has a better initial guess.
Thus, the iterative combination results in an efficient anytime planner with probabilistic optimality guarantees.

Our algorithm is implemented in C++ and is publicly available.
Our second contribution is an open-source benchmark that compares the three major kinodynamic motion planning techniques on the same problem instances.
While we focus in our evaluation on time optimality, our approach supports other cost functions that are additive and non-negative (e.g., energy or squared acceleration).

\textbf{Statement of Extension:}
This article is based on our previous conference paper \cite{hoenig2022dbAstar}, but it provides algorithmic improvements, a faster implementation, and a more extensive evaluation, which includes several problems that require obstacle avoidance and aggressive movements with flying robots.

\emph{New Algorithmic Contributions:}
\begin{itemize}
	\item A new strategy to optimize trajectories with free terminal time in the optimization step of iDb-A*.
	\item The generalization of iDb-A* from translation-invariant systems to systems without invariance (e.g., the acrobot), or with additional linear velocity invariance (e.g., multirotors).
\end{itemize}

Additionally, we provide a refined theoretical analysis and an ablation study of different components, including various optimization strategies, heuristics, and motion primitives.

\section{Related Work}

\textbf{Search-based} approaches rely on existing methods for discrete path planning, such as A* and its variants.
The common approach is to generate short trajectories (\emph{motion primitives}) using a state lattice—a pre-specified discrete set of states~\cite{pivtoraikoKinodynamicMotionPlanning2011, pivtoraiko2005efficient}.
Each primitive starts and ends at a grid cell, and swept cells can be precomputed for efficient collision checking.
Once motion primitives are computed, existing algorithms such as A* or an anytime variant (e.g., Anytime Repairing A* \cite{likhachev2003ara}) can be employed without modification, providing very strong theoretical guarantees on both optimality and completeness with respect to the chosen primitives.
The major challenge is selecting and computing effective motion primitives, especially for high-dimensional systems~\cite{PivtoraikoThesis, dispersionMinimizingPrimitives}.

\textbf{Sampling-based} approaches build a tree \(\mathcal{T}\) rooted at the start state \(\mathbf{x}_s\).
During tree expansion, i) a random state \(\mathbf{x}_{\text{rand}}\) in the state space is sampled, ii) an existing state \(\mathbf{x}_{\text{expand}} \in \mathcal{T}\) is selected, and iii) a new state \(\mathbf{x}_{\text{new}}\) is added with a motion that starts at \(\mathbf{x}_{\text{expand}}\) and moves towards \(\mathbf{x}_{\text{rand}}\).
The motions are typically generated by propagating random control inputs, and the classic version of this approach, \emph{kinodynamic RRT}~\cite{kinodynamicRRT}, is probabilistically complete~\cite{kunzKinodynamicRRTsFixed2015}.
Asymptotic optimality can be achieved when planning in state-cost space (\emph{AO-RRT})~\cite{AO-RRT, AO-RRT-Analysis, ST-RRT-Star} or by computing a sparse tree (\emph{SST*})~\cite{SSTstar}.
These methods rely on a distance function and often require fast nearest neighbor data structures, such as k-d trees, for efficiency.

Sampling-based approaches are designed to explore the state space as rapidly as possible and typically do not explicitly use a heuristic function, unlike search-based methods.
Instead, the exploration/exploitation trade-off is controlled by using goal-biasing.
In these approaches, the goal constraint is typically reformulated using a goal region rather than a goal state.

\textbf{Optimization-based} approaches locally refine an initial trajectory using the gradients of the cost function, dynamics, and collision constraints, unlike the previous gradient-free methods.
The trajectory optimization problem can be formulated as a finite-dimensional nonlinear program (NLP) using either direct collocation or shooting methods~\cite{betts2010practical}, and solved with general-purpose nonlinear solvers (e.g., \cite{wachter2006implementation, gill2005snopt}).

For instance, \emph{TrajOpt}~\cite{TrajOpt} and \emph{GuSTO}~\cite{GuSTO,malyutaConvexOptimizationTrajectory2021} rely on direct transcription and sequential convex programming (SCP), while \emph{KOMO}~\cite{KOMO} combines direct transcription with the Augmented Lagrangian algorithm.

Trajectories can also be computed with optimal control solvers based on Differential Dynamic Programming~\cite{Crocoddyl, howell2019altro} or the iterative Linear Quadratic Regulator (iLQR)~\cite{li2004iterative}.

All optimization-based approaches require an initial guess as a starting trajectory, but this guess does not necessarily need to be feasible.
For nonlinear dynamics and constraints, optimization approaches are incomplete and might fail or converge to a local optimum.
In fact, they often converge to infeasible solutions unless the initial guess is close to a feasible solution.
When successful, the solution quality is significantly higher (e.g., in terms of smoothness) compared to sampling-based or search-based approaches.
Moreover, optimization-based approaches do not suffer directly from the curse of dimensionality, although higher dimensions might result in more local optima.

\textbf{Hybrid} approaches combine search, sampling, and optimization.
For instance, one can combine search and optimization~\cite{natarajanInterleavingGraphSearch2021}, search and sampling~\cite{sakcakSamplingbasedOptimalKinodynamic2019, DIRT, shomeAsymptoticallyOptimalKinodynamic2021b}, or combine sampling and optimization~\cite{RABITstar, kamat2022bitkomo}.
For some dynamical systems, using insights from control theory for motion planning can also be beneficial \cite{LQR-RRTstar, kino-RRTstar, R3T}, but it requires domain knowledge.
Motion planning can also benefit from using machine learning for computational efficiency~\cite{RL-RRT, L-SBMP}.

Our algorithm, iDb-A*, combines ideas and tools from the three main approaches to kinodynamic motion planning.
The most closely related works are methods that reuse edges within a sampling-based planning framework~\cite{shomeAsymptoticallyOptimalKinodynamic2021b} and search-based methods with duplicate detection~\cite{duEscapingLocalMinima2019}.
Compared to these works, we include trajectory optimization and reuse locally optimal precomputed motion primitives interconnected with bounded discontinuity for better success and faster convergence.

Apart from the aforementioned approaches, a popular approach to kinodynamic motion planning problems is to first plan with \textbf{simplified dynamic models} and to use trajectory optimization or a local controller to follow the reference path while accounting for the true dynamics.
The simplest model is a geometric model (holonomic, first-order integrator), which enables geometric motion planning with, e.g., RRT, RRT*, PRM, or PRM* \cite{karaman2011sampling, kavraki1996probabilistic, lavalle1998rapidly}.
Second-order systems can be approximated by a double integrator linear model \cite{webb2012kinodynamic}.

The trajectories computed with simplified dynamics can then be used as initial guesses for trajectory optimization (that is, optimization-based approaches as previously discussed), model predictive control, or system-specific controllers for quadcopters \cite{lee2010geometric}, unicycle-like robots~\cite{kim2002controllers}, or car-like robots~\cite{laumond1998robot}.
System-specific motion planners can exploit certain properties of the dynamics, such as differential flatness in quadcopters, which allows faster motion planning as shown for quadcopters~\cite{mellinger2011minimum} and for some specific fixed-wing UAVs~\cite{bry2015aggressive}.
However, differential flatness cannot account for actuation constraints directly—leading to either conservative or infeasible trajectories, especially for small UAVs with a low thrust-to-weight ratio.

Notably, planning with simplified dynamics does not guarantee the generation of valid motion plans and demands an in-depth understanding of the dynamical system.
In our algorithm, iDb-A*, connecting motion primitives with bounded discontinuity during the search step can be interpreted as an alternative form of simplified dynamics.
However, iDb-A* is complete and asymptotically optimal because it combines search and optimization in an iterative fashion, increasing the number of motion primitives and reducing the allowed discontinuity in each iteration.

\section{Problem Description}
\label{sec:problem_description}

We consider a robot with a continuous state \(\vx \in \sX\) (e.g., \(\sX \subseteq \mathbb{R}^{d_x}\)) that is actuated by actions \(\vu \in \sU \subset \mathbb{R}^{d_u}\).
The dynamics of the robot are deterministic, described by a differential equation,
\begin{equation}
	\dot{\vx} = \vf(\vx, \vu).
	\label{eq:dynamics}
\end{equation}
To employ gradient-based optimization, we assume that we can compute the Jacobian of \(\vf\) with respect to \(\vx\) and \(\vu\), typically available in systems studied in kinodynamic motion planning, such as mobile robots or rigid-body articulated systems.
We use \(\mathcal{X}_{\text{free}} \subseteq \mathcal{X}\) to denote the collision-free space, i.e., the subset of states that are not in collision with the obstacles in the environment.

We discretize the dynamics \eqref{eq:dynamics} with a zero-order hold, i.e., we assume the applied action is constant during a time step of duration \(\Delta t\).
The discretized dynamics can then be written as,
\begin{equation}
	\label{eq:dynamics_discrete}
	\vx_{k+1} \approx \text{step}(\vx_k, \vu_k) \equiv \vx_k + \vf(\vx_k, \vu_k)\Delta t \,,
\end{equation}
using a small \(\Delta t\) to ensure the accuracy of the Euler approximation.
We use \(K \in \mathbb{N}\) to denote the number of time steps (which is not fixed but subject to optimization), \(\seqX = \langle \vx_0, \vx_1, \ldots, \vx_K \rangle\) to denote the sequence of states sampled at times \(0, \Delta t, \dots, K\Delta t\) and \(\seqU = \langle \vu_0, \vu_1, \ldots, \vu_{K-1} \rangle\) to denote the sequence of actions applied to the system for the time frames \([0,\Delta t), [\Delta t, 2\Delta t), \ldots, [(K-1)\Delta t, K\Delta t)\).
The objective of navigating the robot from its start state \(\vx_s\) to a goal state \(\vx_g\) can then be framed as the optimization problem,
\begin{subequations}
	\label{eq:motion-planning}
	\begin{align}
		  & \min_{\seqU,\seqX,K} J(\seqU,\seqX) \,, \label{eq:j}                                                           \\
		\text{s.t.
		} & \vx_{k+1} = \text{step}(\vx_k, \vu_k)                     & \forall k \in \{0,\ldots,K-1\} \label{eq:step} \,, \\
		  & \vu_k \in \sU                                             & \forall k \in \{0,\ldots,K-1\} \label{eq:u} \,,    \\
		  & \vx_k \in \sX_{\text{free}} \subseteq \sX                 & \forall k \in \{0,\ldots,K\} \label{eq:x} \,,      \\
		  & \vx_0 = \vx_s; \,\, \vx_K = \vx_g \label{eq:terminal} \,,
	\end{align}
\end{subequations}
with the cost term \(J(\mathbf{U},\mathbf{X}) = \sum_{k=0}^{K-1} j(\vu_k,\vx_k)\, \Delta t\), where \(j(\vu_k,\vx_k) \geq 0\).
In this paper, we will focus on time-optimal trajectories, i.e., \(j(\vu_k,\vx_k) = 1\); \(J(\seqU,\seqX, K) = K\Delta t\), but our framework can be applied to optimize any additive cost function, for example, minimum control effort \(j(\vu_k,\vx_k) = \|\vu_k\|^2\).

We assume the dynamics function \(\text{step}(\vx,\vu)\), control space \(\mathcal{U}\), state space \(\mathcal{X}\), and cost function \(j(\vx,\vu)\),
are known before solving the problem, which allows us to precompute motion primitives.

\begin{example}
	\label{ex:unicycle}
	Consider a unicycle robot with state \(\vx = [x, y, \theta] \in \mathbb{R}^2 \times SO(2)\), i.e., \(x, y\) are the position and \(\theta\) is the orientation.
	The actions are \(\vu = [v, \omega] \in \sU \subset \mathbb{R}^{2}\), i.e., the speed and angular velocity can be controlled directly.
	The dynamics are \(\dot{\vx} = [v \cos(\theta), v \sin(\theta), \omega]\).
	The choice of \(\sU\) can make this low-dimensional problem challenging to solve.
	For example, \cref{fig:overview:uni1} shows a plane-like case (positive minimum speed, i.e., \(0.25 \leq v \leq 0.5\) \SI{}{m/s}) with a malfunctioning rudder (asymmetric angular speed, i.e., \(-0.25 \leq \omega \leq 0.5\) \SI{}{rad/s}).
\end{example}

\begin{example}
	\label{ex:quadrotor}
	Consider a quadrotor \(\vx = [\mathbf{p}, \mathbf{v}, \mathbf{q}, \mathbf{w}]\) in \(\mathbb{R}^{9} \times SO(3)\) where \(\mathbf{p}\) represents the position, \(\mathbf{v}\) is the velocity, \(\mathbf{q}\) represents the orientation using a quaternion, and \(\mathbf{w}\) is the angular velocity in the body frame.
	The control input is the force at each rotor, \(\mathbf{u} \in \mathbb{R}^4\).
	The dynamics are,
	\begin{subequations}
		\begin{align}
			\mathbf{\dot{v}} = m^{-1} \mathbf{R}(\mathbf{q}) \mathbf{B}_1 \mathbf{u} + \mathbf{g},        \\
			\mathbf{\dot{w}} = \mathbf{I}^{-1} (\mathbf{B}_0 \mathbf{u} - \mathbf{w} \times \mathbf{Iw}), \\
			\mathbf{\dot{p}} = \mathbf{v}, \quad \mathbf{\dot{q}} = \frac{1}{2} \mathbf{q} \otimes \mathbf{w},
		\end{align}
	\end{subequations}
	where \(m\) is the mass, \(\mathbf{I}\) represents the inertia matrix, \(\mathbf{g}\) is the gravity vector, \(\mathbf{R}(\mathbf{q})\) is the rotation matrix corresponding to the quaternion \(\mathbf{q}\), and \(\otimes\) denotes the quaternion product.
	The matrices \(\mathbf{B}_0,\mathbf{B}_1 \in \mathbb{R}^{3\times4}\) are constant and depend on the quadcopter's geometry.
	The parameters of the Bitcraze Crazyflie 2.1 robot are used, which, with a very low thrust-to-weight ratio of 1.3 (i.e., \(0 \leq u_i \leq 1.3 \times g \times m /4\)), pose significant challenges for kinodynamic motion planning.
\end{example}

\begin{figure*}[ht]
	\centering
	\begin{tabular}{cccc}
		\includegraphics[width=.19\textwidth]{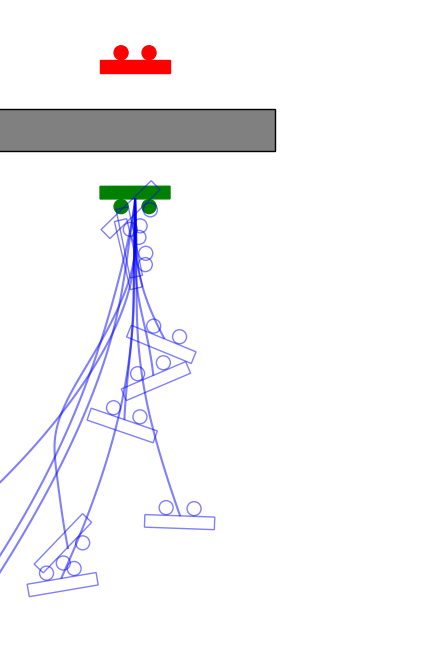}              &
		\includegraphics[width=.19\textwidth]{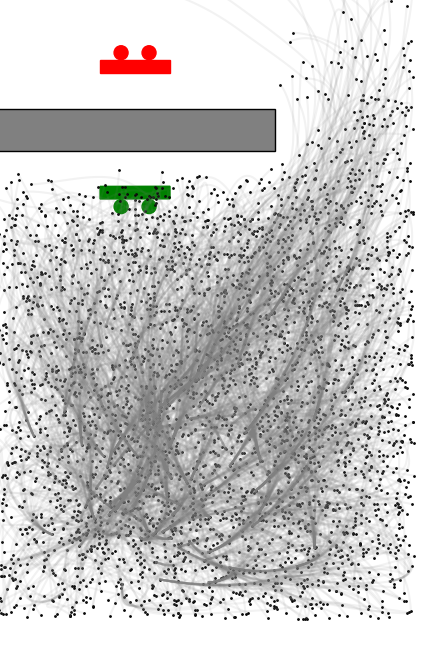} &
		\includegraphics[width=.19\textwidth]{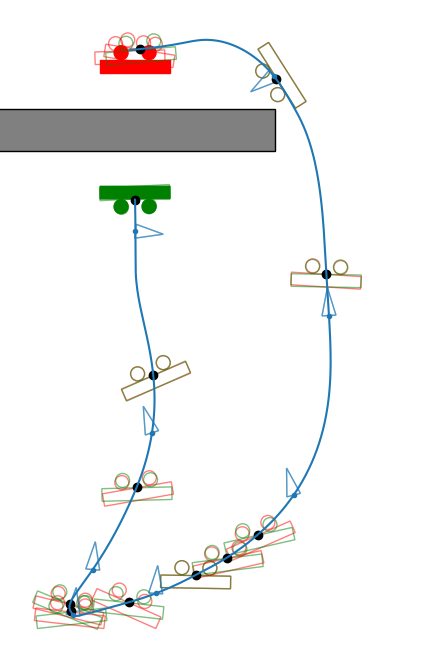}             &
		\includegraphics[width=.19\textwidth]{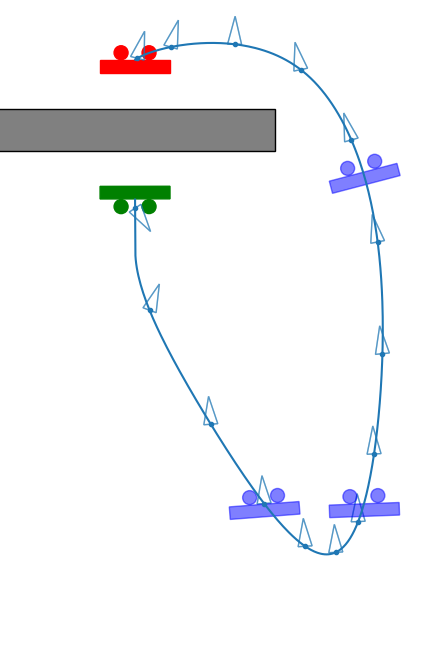}                                  \\

		(a)                                                                                                 & (b) & (c) & (d)
	\end{tabular}

	\caption{
		Visual representation of iDb-A* in the problem \emph{Planar rotor -- Recovery obstacles} (i.e., a recovery maneuver with a planar multirotor).
		Start and goal configurations are shown in solid green (note that the multirotor starts upside down) and red, respectively.
		(a) In the search step of iDb-A*, called Db-A*, we expand states (in this case, the initial state) using motion primitives that are applicable with bounded discontinuity.
		(b) Intermediate search tree during the execution of Db-A*.
		For visualization, the 6D configuration space is projected into a 2D space.
		(c) Solution found by Db-A*.
		The transparent, border-only green and red shapes show the start and end of each motion primitive, respectively.
		They do not match exactly, which highlights the allowed discontinuities when stitching motion primitives (note that the discontinuities in the velocities are not shown in this 2D representation).
		(d) The output of Db-A* is used to warm-start nonlinear trajectory optimization.
		The resulting trajectory, shown in blue, fulfills the dynamics constraints and is locally optimal.
	}
	\label{fig:visual_representation}
\end{figure*}

\section{iDb-A* - Overview}

Our iterative approach, which combines search and optimization, is detailed in \cref{alg:overview}.
We require a \emph{large} set of \emph{motion primitives} $\mathcal{M}_{L}$, which will be used incrementally and can be computed offline.
Motion primitives are short trajectories that fulfill our dynamics (see
\cref{def:motion-primitive} and \cref{sec:motion-primitives} for a formal definition and details on primitive generation).
In every iteration of iDb-A*, the following steps are performed:

\begin{enumerate}
	\item We increase the number of available motion primitives for the search (by choosing
	      new primitives from $\mathcal{M}_{L}$) and decrease the allowed discontinuity bound $\delta$ (\cref{alg:overview:sM,alg:overview:delta}).
	\item The discrete planner, Db-A*, computes a trajectory using the current set of motion primitives.
	      This trajectory may include a bounded violation of the dynamic constraints (\texttt{Db-A*} in \cref{alg:overview:dbAstar}, see \cref{sec:Discontinuity-Bounded-Search}).
	\item The result of Db-A* is used to initialize an optimization-based motion planner that attempts to compute a feasible and locally optimal trajectory (\cref{alg:overview:opt}).
	      See \texttt{Optimization} (\cref{sec:optimization}).
	\item Additional motion primitives are extracted from the output of the trajectory optimization (\texttt{Extract Primitives} in \cref{alg:overview:extract}).
\end{enumerate}

iDb-A* executes a sequence of A*-searches using a growing, randomized set of motion primitives, akin to Batch Informed Tree (BIT*) \cite{gammell2015batch}, a successful sampling-based planner for geometric motion planning.
In each iteration, the computation time and the success of the search and optimization steps depend on the number of motion primitives $n_i = |\mathcal{M}_i|$ and the allowed discontinuity bound $\delta_i$.
We can choose \texttt{AddPrimitives} (\cref{alg:overview:sM}) and \texttt{DecreaseDelta} (\cref{alg:overview:delta}) so that these parameters follow geometric sequences $n_{i+1} = n_{i} n_r$ and $\delta_{i+1} = \delta_i \delta_r$, with $n_r > 1$, $\delta_r < 1$ and initial values $n_0$, $\delta_0$.
We find this strategy easier to tune than the alternative approach presented in our prior work~\cite{hoenig2022dbAstar}, where we choose only the scheduling for the number of motion primitives and estimate the allowed discontinuity bound based on a desired approximate branching factor.

Motion primitives can also be extracted online in \cref{alg:overview}.
The \texttt{ExtractPrimitives} procedure utilizes the output of the optimization by dividing the trajectory into small sections.
The resulting primitives can be particularly useful for the planning problem at hand as they are computed with full knowledge of the environment.

A visual representation of some key components of iDb-A* is shown in \cref{fig:visual_representation}
using the problem \emph{Planar rotor -- Recovery obstacles}.

\begin{algorithm}[t]
	\caption{iDb-A* -- Iterative Discontinuity Bounded A*}
	\KwData{$\vx_s, \vx_g, \mathrm{step} , \sX_{\mathrm{free}}, \sU, \sM_{\mathrm{L}} $}
	\KwResult{$\seqX, \seqU$ }
	\label{alg:overview}
	\DontPrintSemicolon
	\SetKwFunction{AddPrimitives}{AddPrimitives}
	\SetKwFunction{ExtractPrimitives}{ExtractPrimitives}
	\SetKwFunction{ComputeDelta}{ComputeDelta}
	\SetKwFunction{ChooseDelta}{DecreaseDelta}
	\SetKwFunction{ChoosePrimitives}{IncreasePrimitives}
	\SetKwFunction{DiscontinuityBoundedAstar}{Db-A*}
	\SetKwFunction{Optimization}{Optimization}
	\SetKwFunction{Report}{Report}
	$\sM_0 \leftarrow \emptyset$ \Comment*{Initial Set of motion primitives}
	$c_{\mathrm{max}} \leftarrow \infty$ \Comment*{Solution cost bound}
	\For{$i=1,2,\ldots$}{
		$\sM_i \leftarrow \sM_{i-1} \cup \AddPrimitives(\mathcal{M}_L, i)$\label{alg:overview:sM}\;
		$\delta_i \leftarrow \ChooseDelta(i)$\label{alg:overview:delta}\;
		$\seqX_d, \seqU_d  \leftarrow$ \DiscontinuityBoundedAstar{$\vx_s, \vx_g, \sX_{\mathrm{free}}, \sM_i, \delta_i, c_{\mathrm{max}}$}\label{alg:overview:dbAstar}\;
		\If{$\seqX_d, \seqU_d$ successfully computed}{
			$\seqX, \seqU \leftarrow$ \Optimization{$\seqX_d, \seqU_d, \vx_s, \vx_g , \mathrm{step} , \sX_{\mathrm{free}} , \sU $}\label{alg:overview:opt}\;
			\If{$\seqX, \seqU$ successfully computed}{
				\Report{$\seqX, \seqU$} \Comment*{New solution found}
				$c_{\mathrm{max}} \leftarrow \min(c_{\mathrm{max}}, J(\seqX, \seqU))$ \Comment*{Cost bound}
			}
			$\sM_{i} \leftarrow  \sM_i \cup  \ExtractPrimitives(\seqX, \seqU)\label{alg:overview:extract}$\;
		}
	}
\end{algorithm}

\newpage

\section{Discontinuity Bounded A* Search}
\label{sec:Discontinuity-Bounded-Search}

\begin{figure*}[ht]
	\centering
	\begin{subfigure}[b]{.2\textwidth}
		\begin{tikzpicture}[dot/.style={inner sep=1pt, fill, circle}]
			\node[dot,color=green!40!gray!] (A) at (0,0) {};
			\node[dot, color= red!40!gray!] (G) at (\theGx, \theGy) {};

			\draw[white] (G) circle (0.6cm);
			\node[dot,color=white] (Gdelta) at (\theGdeltax,\theGdeltay) {};
			\node[dot, color=gray] (A1) at (0,0.3) {};
			\node[dot, color=gray] (A2) at (-0.2,0.1) {};
			\node[dot, color=gray] (A3) at (0.2,-.2) {};

			\node[dot,color=white] (B1) at (2,1) {};
			\node[dot] (B2) at (2,0.1) {};
			\node[dot,color=white] (B2d) at (2.05,0.3) {};
			\node[dot] (B3) at (1,-1) {};
			\node[dot,color=white] (C3) at (-.6,-.7) {};

			\draw[white] (C3) circle (0.4cm);

			\draw[gray] (A) circle (0.4cm);
			\node at (A) [below ] {\color{OliveGreen}$\vx_s$};
			\node at (G) [below] {\color{Red}$\vx_g$};
			\draw [\tHead,  dashed, thick , black ] (A1) .. controls (1,1) .. (B1);
			\draw [\tHead, thick, black] (A2) .. controls (1,-.1) ..  (B2);
			\draw [\tHead, thick, black] (A3) .. controls (.5,-1) .. (B3);
			\draw [\tHead, thick, gray] (.5,.5) .. controls (1.5,.2) .. (2.5,0.5);
			\draw [\tHead, thick, gray] (-.5,.5) .. controls (-.5, 1) .. (-.6,1.5);
			\draw [\tHead, thick, gray] (0, -1.2) .. controls (2.1, -1.5) ..  (2,-1);
			\draw [\tHead, thick, gray] (1, -.7) .. controls( 1.0, -.1) .. (2,-.1);
			\draw [\tHead, thick, gray] (.95, -.9) .. controls (.5, -.76) .. (C3);
			\draw [\tHead, thick, gray] (B2d) .. controls (3,-0.1) .. (Gdelta);
			\tObs
		\end{tikzpicture}
		\caption{Db-A* -- 1}
	\end{subfigure} \hspace{5mm}
	\begin{subfigure}[b]{.2\textwidth}
		\begin{tikzpicture}[dot/.style={inner sep=1pt, fill, circle}]
			\node[dot, color=green!40!gray!] (A) at (0,0) {};
			\node[dot, color= red!40!gray!] (G) at (\theGx,\theGy) {};
			\draw[white] (G) circle (0.6cm);
			\node[dot, color=gray] (A2) at (-0.2,0.1) {};
			\node[dot, color=gray] (A3) at (0.2,-.2) {};

			\node[dot,color=white] (B1) at (2,1) {};
			\node[dot] (B2) at (2,0.1) {};
			\node[dot, color=white] (B2d) at (2.05,0.3) {};
			\node[dot] (B3) at (1,-1) {};

			\draw[gray] (B2) circle (0.4cm);
			\draw[gray] (B3) circle (0.4cm);

			\node[dot] (C3) at (-.6,-.7) {};
			\draw[gray] (C3) circle (0.4cm);

			\draw[gray] (A) circle (0.4cm);
			\node at (A) [below ] {\color{OliveGreen}$\vx_s$};
			\node at (G) [below] {\color{Red}$\vx_g$};
			\draw [\tHead, thick, black] (A2) .. controls (1,-.1) ..  (B2);
			\draw [\tHead, thick, black] (A3) .. controls (.5,-1) .. (B3);
			\draw [\tHead, thick, gray] (.5,.5) .. controls (1.5,.2) .. (2.5,0.5);
			\draw [\tHead, thick, gray] (-.5,.5) .. controls (-.5, 1) .. (-.6,1.5);
			\draw [\tHead, thick, gray] (0, -1.2) .. controls (2.1, -1.5) ..  (2,-1);
			\draw [\tHead, thick, dashed, black] (1, -.7) .. controls( 1.0, -.1) .. (2,-.1);
			\draw [\tHead, thick, black] (.95, -.9) .. controls (.5, -.76) .. (C3);
			\draw [\tHead, thick, gray] (B2d) .. controls (3,-0.1) .. (Gdelta);

			\draw [dotted, thick, gray] (B3) -- (G);

			\tObs
		\end{tikzpicture}
		\caption{Db-A* -- 2}
	\end{subfigure} \hspace{5mm}
	\begin{subfigure}[b]{.2\textwidth}
		\begin{tikzpicture}[dot/.style={inner sep=1pt, fill, circle}]
			\node[dot, color=green!40!gray! ] (A) at (0,0) {};
			\node[dot, color= red!40!gray!] (G) at (\theGx,\theGy) {};
			\draw[gray] (G) circle (0.6cm);
			\node[dot, color=gray] (A2) at (-0.2,0.1) {};
			\node[dot, color=gray] (A3) at (0.2,-.2) {};

			\node[dot,color=white] (B1) at (2,1) {};
			\node[dot] (B2) at (2,0.1) {};
			\node[dot] (B2d) at (2.05,0.3) {};
			\node[dot] (B3) at (1,-1) {};

			\draw[gray] (B2) circle (0.4cm);
			\draw[gray] (B3) circle (0.4cm);

			\node[dot,color=black] (Gdelta) at (\theGdeltax,\theGdeltay) {};
			\node[dot] (C3) at (-.6,-.7) {};
			\draw[gray] (C3) circle (0.4cm);

			\draw[gray] (G) circle (0.6cm);
			\draw[gray] (A) circle (0.4cm);
			\node at (A) [below ] {\color{OliveGreen}$\vx_s$};
			\node at (G) [below ] {\color{Red}$\vx_g$};
			\draw [\tHead, thick, purple] (A2) .. controls (1,-.1) ..  (B2);
			\draw [\tHead, thick, black] (A3) .. controls (.5,-1) .. (B3);
			\draw [\tHead, thick, gray] (.5,.5) .. controls (1.5,.2) .. (2.5,0.5);
			\draw [\tHead, thick, gray] (-.5,.5) .. controls (-.5, 1) .. (-.6,1.5);
			\draw [\tHead, thick, gray] (0, -1.2) .. controls (2.1, -1.5) ..  (2,-1);
			\draw [\tHead, thick, black] (.95, -.9) .. controls (.5, -.76) .. (C3);
			\draw [\tHead, thick, purple] (B2d) .. controls (3,-0.1) .. (Gdelta);


			\tObs
		\end{tikzpicture} 
		\caption{Db-A* -- 3}
	\end{subfigure} \hspace{5mm}
	\begin{subfigure}[b]{.2\textwidth}
		\begin{tikzpicture}[dot/.style={inner sep=1pt, fill, circle}]
			\node[dot, color=green!40!gray! ] (A) at (0,0) {};
			\node[dot, color= red!40!gray!] (G) at (\theGx,\theGy) {};
			\node[dot, color=gray] (A2) at (-0.2,0.1) {};

			\node[dot] (Gdelta) at (\theGdeltax,\theGdeltay) {};

			\node[dot] (B2) at (2,0.1) {};
			\node[dot] (B2d) at (2.05,0.3) {};

			\draw[gray, white] (G) circle (0.6cm);


			\node at (A) [below ] {\color{OliveGreen}$\vx_s$};
			\node at (G) [below ] {\color{Red}$\vx_g$};
			\draw [\tHead, thick, purple] (A2) .. controls (1,-.1) ..  (B2);
			\draw [\tHead, thick, purple] (B2d) .. controls (3,-0.1) .. (Gdelta);

			\draw [\tHead, thick, blue!70!gray!] (A) .. controls (2.3,.2) ..  (G);


			\tObs
		\end{tikzpicture}
		\caption{Optimization}
	\end{subfigure}
	\caption{
		(a, b, c) A graphical description of Db-A*.
		The gray edges represent motion primitives, and the states
		\( \vx_s \) and \( \vx_g \) are the start and the goal, respectively.
		(a) Given an initial state \( \vx_s \), we can only apply primitives that start with a discontinuity lower than \( \alpha \delta \) (gray circumference) and are collision-free.
		The applicable primitives are shown with solid black edges.
		(b) The search is ordered by a heuristic (e.g., the \emph{Euclidean} heuristic, shown with a dotted line for the best node) and the cost-to-come.
		When expanding a node, we create new states only if they are not within \( (1-\alpha) \delta \) of a previously discovered state (i.e., the dashed edge is not expanded).
		(c) The search is terminated when a node close to the goal is expanded.
		(d) The solution of Db-A* will be used to warm-start a trajectory optimization algorithm that repairs the discontinuities and locally optimizes the trajectory.
		The optimized trajectory is shown in blue.
	} \label{fig:graphical-dbastar}
\end{figure*}
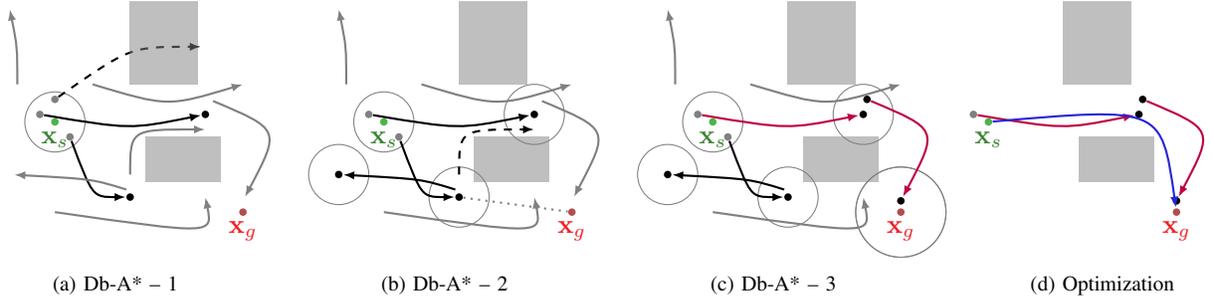

Discontinuity Bounded A* (Db-A*) is a search algorithm that uses a set of motion primitives, which are connected while allowing for a maximum discontinuity.

A motion primitive is a sequence of states and controls that fulfill the dynamics of the system.
Formally,

\begin{definition}[Motion Primitive]
	A motion primitive \( m = (\seqX, \seqU, \vx_s, \vx_f, c) \) is a sequence of states \( \seqX = (\vx_0, \ldots ,\vx_N) \), \( \vx_k \in \sX \), and controls \( \seqU = (\vu_0, \ldots, \vu_{N-1}) \), \( \vu_k \in \sU \) that fulfill the dynamics \( \vx_{k+1} = \step(\vx_k,\vu_k) \).
	It connects the start state \( \vx_s = \vx_0 \) and the final state \( \vx_f = \vx_N \), with a corresponding cost \( c \in \RR^+ \).
	The length of the motion primitive (i.e., the number of states and controls) is randomized.
	\label{def:motion-primitive}
\end{definition}

In the following, we rely on a user-specified \emph{metric} \( d: \sX \times \sX \to \mathbb{R}^+ \), which measures the distance between two states (e.g., a weighted Euclidean norm).
We assume that \( \langle \sX, d\rangle \) is a metric space in order to use efficient nearest-neighbor data structures, such as k-d trees.

\begin{definition}
	\label{definition:discontinuityBounded}
	The pair of sequences \( \seqX = \langle \vx_0, \ldots, \vx_K\rangle \), \( \seqU = \langle \vu_0, \ldots, \vu_{K-1}\rangle \) is a \( \delta \)-discontinuity bounded solution (with \( \delta>0 \)) to the kinodynamic motion planning problem \cref{eq:motion-planning} if and only if the following conditions hold:
	\begin{subequations}
		\begin{align}
			d(\vx_{k+1}, \step(\vx_k, \vu_k)) \leq \delta \quad \forall k \in \{0,\ldots, K-1\} \label{eq:disBound:dynamics}\,, \\
			\vu_k \in \sU \quad \forall k \in \{0,\ldots, K-1\}\,, \label{eq:disBound:U}                                        \\
			\vx_k \in \sX_{\mathrm{free}} \quad \forall k \in \{0,\ldots, K\}\,, \label{eq:disBound:Xfree}                      \\
			d(\vx_0, \vx_s) \leq \delta \,, \label{eq:disBound:xs}                                                              \\
			d(\vx_K, \vx_g) \leq \delta \,.
			\label{eq:disBound:xf}
		\end{align}
	\end{subequations}
\end{definition}

Intuitively, \cref{definition:discontinuityBounded} enforces that the sequences connect the start and goal states with a bounded error \( \delta \) in the dynamics, which corresponds to “stitching” primitives together.

\subsection{Algorithm}

Our approach to computing such sequences is Discontinuity Bounded A* (Db-A*).

Db-A*, like A*, is an informed search that relies on a heuristic $h: \sX \to \mathbb R$ to explore an implicitly defined directed graph efficiently.
Nodes in the graph represent states, and an edge between two nodes indicates that there exists a motion that connects the states, allowing up to $\delta$-discontinuity.

The algorithm is shown in \cref{alg:dbAstar}, and \cref{fig:graphical-dbastar} provides a graphical representation.
Db-A* keeps track of nodes to explore using a priority queue, which is sorted by the lowest $f(\vx)=g(\vx)+h(\vx)$ value, where $g(\vx)$ is the cost-to-come.
The overall structure is the same as in A*: The OPEN priority queue $\mathcal O$ is initialized with the start state (\cref{alg:dbAstar:Oinit}).
At each iteration, we remove the first element from $\mathcal O$ (\cref{alg:dbAstar:Opop}), and that node is expanded using the applicable collision-free motion primitives (\crefrange{alg:dbAstar:expand1}{alg:dbAstar:expand2}).
A motion $m$ is applicable in state $\vx$ if its start state $m.
	\vx_s$ is within a distance of at most $\alpha \delta$ (\cref{alg:dbAstar:Mprime}), resulting in a new state $m.\vx_f$.

New states are added to $\mathcal O$ (\cref{alg:dbAstar:Oadd}) only if they are not within $(1-\alpha)\delta$ of previously discovered nodes.
If the state is close to a previous node, the previous node is updated if the new cost-to-come is reduced (\cref{alg:dbAstar:update}).
Therefore, unlike A*, we consider two states to be equivalent if they are within $(1-\alpha)\delta$ of each other.

For computing and updating the cost to come, we consider the cost of the motion primitive $m.c$ and the cost of the discontinuity bound using a lower bound function $l: \sX \times \sX \to \mathbb R^+$ of the true cost.
Therefore, given a state $\vx$ with cost to come $g(\vx)$, the cost of a new state $\vx' = \vx \oplus m = m.
	\vx_f$ is $g(\vx') = g(\vx) + l(\vx, m.\vx_s) + m.c$ (\cref{alg:dbAstar:gt}).

The search terminates when we find a node that is within $\delta$ distance of the goal state (\cref{alg:dbAstar:sol}).

For efficient search, we employ two k-d trees.
The first tree indexes the start states of all provided motion primitives, which can be done once at the beginning.
The second k-d tree contains the states of all explored nodes and grows dynamically.
It is used to find nearby previously explored states.
The discontinuity with a magnitude of up to $\delta$ may occur in two cases: first, when we select suitable motion primitives for expansion (\cref{alg:dbAstar:Mprime}), and second, when we prune a potential new node in favor of already existing states (\cref{alg:dbAstar:nn}).
The tradeoff between the two can be adjusted by a user-specified parameter $\alpha \in (0, 1)$.

\begin{algorithm}[t]
	\caption{Db-A* -- Discontinuity Bounded A*}
	\label{alg:dbAstar}
	\DontPrintSemicolon

	\SetKwFunction{NearestNeighborInit}{NearestNeighborInit}
	\SetKwFunction{NearestNeighborQuery}{NearestNeighborQuery}

	\SetKwFunction{IsStateNovel}{IsStateNovel}

	\SetKwFunction{NearestNeighborAdd}{NearestNeighborAdd}
	\SetKwFunction{PriorityQueuePop}{PriorityQueuePop}
	\SetKwFunction{PriorityQueueInsert}{PriorityQueueInsert}
	\SetKwFunction{PriorityQueueUpdate}{PriorityQueueUpdate}
	\SetKwFunction{UpdateNode}{UpdateNode}

	\KwData{$\vx_s, \vx_g, \sX_{\mathrm{free}}, \sM, \delta, c_{\mathrm{max}}$}
	\KwResult{$\seqX_d, \seqU_d$ or Infeasible}

	$\mathcal O \leftarrow \{Node( \vx: \vx_s, g: 0, h: h(\vx_s), p: None, a: None) \}$ \label{alg:dbAstar:Oinit} \quad \Comment{Initialize open list (priority queue)}
	$ \mathcal C \leftarrow \{\} $ \quad \Comment*{Initialize list of closed nodes}
	\While{$|\mathcal O| > 0$}{
		\Comment{Remove node with lowest f-value} \label{alg:dbAstar:Opop}
		$n \leftarrow \PriorityQueuePop(\mathcal O)$

		\If{$d(n.\vx, \vx_g) \leq \delta $}  {
			\Return $\seqX_d, \seqU_d $ \Comment*{Trace back solution} \label{alg:dbAstar:sol}
		}

		\Comment{Find applicable motion primitives with discontinuity up to $\alpha \delta$}
		$\mathcal M' \leftarrow \NearestNeighborQuery(n.\vx , \mathcal M, \alpha\delta)$
		\label{alg:dbAstar:Mprime}\;

		\ForEach{$m\in \mathcal M'$ \label{alg:dbAstar:expand1}}{

			\If{ $ m  \notin \sX_{\mathrm{free}}$ \label{alg:dbAstar:collision}}{
				\Continue \label{alg:dbAstar:expand2} \Comment*{Motion is not collision-free}
			}

			$\vx' \gets m.\vx_f $ \Comment*{Tentative new state}

			$g' \leftarrow n.g + m.c + l(n.\vx , m.\vx_s) $ \Comment*{Tentative g score}
			\label{alg:dbAstar:gt}

			\If{ $ g' > c_{\mathrm{max}}$}{
				\Continue
			}

			\Comment{Check if we have previously discovered states within $(1-\alpha)\delta$}
			$\mathcal{N} = \NearestNeighborQuery(  \vx', \mathcal{O} \cup \mathcal{C}, (1-\alpha) \delta)$
			\label{ref:dbAstar:nn}\;
			\If{$ \mathcal{N} = \emptyset$ }{
				\PriorityQueueInsert$(\mathcal O, Node(\vx', g', h(\vx'), n, m))$
				\label{alg:dbAstar:Oadd}\;
			}
			\Else																														{

				\ForEach{$s\in \mathcal N$ \label{alg:dbAstar:nn}}{
					$g'' = g' + l(\vx',s.\vx)$\;
					\If{ $g'' < s.g$ } {
						$s = \UpdateNode(\{g:g'', p:n, a:m\})$\label{alg:dbAstar:update} \Comment*{Update node.
							If it is in closed list, reinsert in open list.
						}
					}
				}
			}

		}

		Insert $n$ in $\mathcal{C}$

	}
	\Return Infeasible
\end{algorithm}

\subsection{Heuristic Functions}
\label{sec:heuristic}

In kinodynamic motion planning, three heuristic functions \( h(\vx) \) are particularly relevant:

\paragraph{Euclidean Heuristic}
The \textit{Euclidean} heuristic is based on the Euclidean distance to the goal, considering state and control constraints such as maximum velocity or acceleration, while ignoring dynamics and obstacles.
It is usually computed by a combination of weighted Euclidean or infinity norms and does not require any precomputation.

\paragraph{Roadmap Heuristic}
The \textit{Roadmap} heuristic approximates the collision-free space using a geometric roadmap, thus taking collisions and control bounds into account but ignoring the dynamics.
It requires a precomputation step to build the geometric roadmap, which can be reused between iterations of iDb-A*, and it is usually more informative in problems where obstacles play a significant role.
Given a finite set of state-cost pairs \( S = \{(\vs_i,c_i) | ~ \vs_i \in \mathcal{X}, c_i \in \mathbb{R}\} \), the heuristic function is given by:
\begin{equation}
	h(\vx) = \min_{(\vs_i, c_i) \in S : ~ d(\vs_i, \vx) \leq R} \{c_i + l(\vs_i, \vx) \} \,,
\end{equation}
where \( l \) is a lower bound on the cost for reaching \( \vs_i \) from \( \vx \), and \( R \) is a user-defined connection radius.
To compute \( S \), we construct a roadmap with randomly sampled configurations and annotate each vertex with the geometric cost-to-go (i.e., using the \textit{Euclidean} heuristic for each collision-free edge).
Each query requires a nearest-neighbor search (implemented using a k-d tree).

\paragraph{Blind Heuristic}
Lastly, we also evaluate the \textit{Blind} heuristic, where \( h(\vx) = 0, ~ \forall \vx \).
This heuristic is motivated by systems where the dynamics play a central role, such that feasible trajectories of the robot strongly differ from straight lines in the state space (e.g., the acrobot), making the \textit{Euclidean} and \textit{Roadmap} heuristics uninformative.

\begin{example}
	Consider the unicycle robot, with state and dynamics as in Example \ref{ex:unicycle}.
	Given a state \( \vx=[x,y,\theta] \), goal \( \vg=[g_x,g_y,g_\theta] \), and the control bounds \( |v| \leq v_{\textup{max}} \), \( |w| \leq w_{\textup{max}} \), the \textit{Euclidean} heuristic is:
	\begin{align}
		h(\vx) = \max \{ ~ & v_{\textup{max}}^{-1} \norm{ [x, y] - [g_x, g_y] }, \nonumber \\  &
		   w_{\textup{max}}^{-1} D_{\theta}(x_{\theta}, g_{\theta}) \}\,,
	\end{align}
	where \( D_{\theta}( \cdot, \cdot) \) is the distance metric in SO(2).
\end{example}

\subsection{Equivalence Between Continuous States}

A fundamental issue when applying search algorithms in continuous spaces is that the set of possible reachable states is infinite.
The search algorithm will unnecessarily expand similar states, especially when the heuristic is not informative.

To mitigate this issue, in Db-A* we have defined a notion of similarity or equivalence between states, often referred to as duplicate detection in related work \cite{duEscapingLocalMinima2019, gonzalez2011search, maray2022improved}.
In \cref{alg:dbAstar}, a state is considered not novel if it is close to a previously discovered state, in which case the state is pruned.
This makes Db-A* incomplete and suboptimal for fixed values of \( \delta \).
As Db-A* runs for decreasing \( \delta \) inside iDb-A*, the size of the equivalence class is iteratively reduced.
In combination with subsequent optimization, we found our duplicate detection to be sufficient for both good practical performance and asymptotic optimality (\cref{sec:theory}).

\subsection{Invariance and Equivariance in the Motion Primitives}
\label{sec:inv-equi-search}
To decide which motion primitives are applicable in a state (\cref{alg:dbAstar:Mprime}), we can exploit invariance and equivariance in the system dynamics, which allows us to reuse the same primitive in different states with smaller discontinuities.

A prominent example is the translation invariance of the dynamics of mobile robots.
Intuitively, a valid motion primitive can be ``translated'' to match other starting states so that there is no discontinuity in the translation components of the state.
This concept is formalized in \cref{sec:motion-primitives}, where we provide two examples: translation invariance for a car-like robot and translation and linear velocity invariance for flying robots.

From an implementation perspective, to account for invariances, all primitives are stored in a canonical form (e.g., with 0 translation component) inside a k-d tree.
At runtime, we transform the query state into the canonical form to check which primitives are applicable, and the valid primitives are then transformed on-the-fly to expand the query state.

\subsection{Efficient Collision Checks with Collision Shapes}

Collision checking is one of the most expensive operations in motion planning.
To check collisions between the environment and a motion primitive (\cref{alg:dbAstar:collision}), we use either precomputed collision shapes of the motion primitive (if available) or check collisions at a small temporal resolution.
Importantly, precomputed collision shapes can also be transformed online for any translation or rotation of the motion primitive.
When available, collision shapes are considerably faster than checking individual configurations at a chosen resolution.

In practice, we observe that the running time of Db-A* is dominated by both nearest neighbor searches to find neighboring states and applicable motion primitives, and by collision detection (see \cref{fig:time-spent-each-component} in \cref{sec:ablation}).

\section{Trajectory Optimization}
\label{sec:optimization}

For the \texttt{Optimization} subroutine (\cref{alg:overview:opt} in
\cref{alg:overview}), we use gradient-based trajectory optimization.
We assume that the derivatives of the dynamics, the distance function, and the collision constraints can be computed efficiently, e.g., using analytical expressions, a differentiable simulator, or finite differences.
For collisions, we now require a signed distance function instead of a binary collision check.

The objective of the optimization is to solve the original kinodynamic motion planning problem \eqref{eq:motion-planning}, using the output of Db-A* as an initial guess, $\seqX_d = \{\vx_0 , \vx_1 , \ldots , \vx_{K} \} $, $\seqU_d = \{\vu_0 , \vu_1 , \ldots , \vu_{K-1} \} $.

Even with the initial guess from Db-A*, the optimization problem \eqref{eq:motion-planning} is challenging for gradient-based optimization, especially when starting with large discontinuities.

The problem is nonconvex even for systems with linear dynamics and constraints, and the infeasible initial guess and underactuation of the systems prevent the use of time-optimal path tracking approaches (e.g., \cite{verscheure2009time, pham2018new}).
In this section, we describe four different methods that we studied for the optimization step of iDb-A*, based on different approaches in the optimization and control literature.

\paragraph{Joint Optimization of Trajectory and Terminal Time (Free-dt)}

This approach adds the duration of the time interval as an optimization variable for joint nonlinear optimization of time and trajectory:
\begin{subequations}
	\label{eq:dt}
	\begin{align}
		\min_{\mathbf{X}, \mathbf{U} , \Delta t } & \sum_k \Delta t~ j(\vx_k,\vu_k) \,,                                             \\
		\text{s.t.
		} \quad                                   & \vx_{k+1} = \vx_k + \vf(\vx_{k},\vu_k) \Delta t \,,                             \\
		                                          & \text{Constraints} ~ \eqref{eq:u}, \eqref{eq:x}~\text{and}~\eqref{eq:terminal}.
	\end{align}
\end{subequations}
Here, $\Delta t $ is a variable, initialized to $\Delta t_{\text{ref}}$, the reference value used for time-discretization in the motion primitives for the given dynamical system.
The number of time steps $K$ is fixed.
After solving \eqref{eq:dt}, we would like to have the solution trajectory discretized with the original time step duration for consistency.
Thus, we recompute the state and control trajectories using the reference time step $\Delta t_{\text{ref}}$.
This requires i) interpolation of the solution of \eqref{eq:dt} with $\Delta t_{\text{ref}}$, and ii) a second run of trajectory optimization, now with fixed $\Delta t = \Delta t_{\text{ref}}$ to repair the small errors arising from the Euler integration with different step sizes (note that the second optimization is very efficient because the interpolation of the solution of \eqref{eq:dt} is already accurate).

\paragraph{Hierarchical Time Search (Search-T)}
\label{sec:hierarchical-time-search}

A hierarchical approach that combines a linear search on the terminal time with trajectory optimization with a fixed terminal time.
Given the time bounds $\{T_{\text{min}}, T_{\text{max}}\}$ and a time resolution $h$, we define a set of candidate times $\mathcal{T} = \{ T_\text{min}, T_\text{min} + h, \ldots, T_\text{max} \}$, and solve the hierarchical optimization problem:
\begin{equation}
	\label{eq:search-t}
	\min_{ T_i \in \mathcal{T} } ~ \text{Trajectory\_Optimization}(T_i),
\end{equation}
where $\text{Trajectory\_Optimization}(T_i)$ first rescales temporally the initial guess to have a time duration of $T_i$ (the time step size $\Delta t$ is kept constant, but the number of time steps of the trajectory varies) and then solves \eqref{eq:motion-planning} with a fixed number of time steps $K_i$ and fixed $\Delta t$.
For time-optimal trajectories, we start the search at $T_{\text{min}}$, and stop at the first $T_i$ when $\text{Trajectory\_Optimization}(T_i)$ is feasible.
Based on the duration $T_0$ of the initial guess, a reasonable choice of the parameters is, e.g., $ T_{\text{min}} = 0.5 T_0$, $T_{\text{max}} = 2 T_0$, and $h = (T_{\text{max}} - T_{\text{min}}) / 10$.

\subsection{Sliding Window Optimization}

Instead of considering the full trajectory at once, the optimization step can try to repair the discontinuities locally.
While this approach is more constrained to follow the initial guess, it is also potentially faster.
Solving a sequence of smaller subproblems often reduces the computational cost and number of nonlinear iterations, which typically increase for longer trajectories.

The following two approaches are inspired by two optimal control formulations, namely Model Predictive Control (MPC) and Model Predictive Contouring Control (MPCC) \cite{lam_mpcc}.
In both approaches, we repair the initial guess trajectory in a sequence of steps, starting from the beginning of the initial guess.
At each step, we (i) optimize the sequence of states and controls inside a small optimization window of length \(W_o\) (e.g., 50 steps), (ii) fix the first \(W_s \leq W_o\) states and controls (e.g., 10 steps), and (iii) move the optimization window by \(W_s\), so that the new start state is the last fixed state.
The time step duration \(\Delta t\) is fixed, but the resulting final trajectory might have a different duration than the initial guess from Db-A*.

\paragraph{Subgoal Following (MPC)}
The optimization problem in each step is:
\begin{subequations}
	\begin{align}
		\min_{ \seqX_W, \seqU_{W} } & k_1 d( \vx_W , \mathbf{g} )^2 + \Delta t \sum_w \, j(\vx_w,\vu_w) \,,         \\
		\text{s.t.
		} \quad                     & \text{Constraints} ~ \eqref{eq:step}, \eqref{eq:u},~ \text{and}~\eqref{eq:x}.
	\end{align}
\end{subequations}
Here, \(\seqX_W,\seqU_W\) are the sequence of states and controls in the optimization window, \(\vx_W\) is the last state of the current window, and \(\mathbf{g}\) is the subgoal state for this optimization window, chosen from the Db-A* initial guess to encourage making progress in the path.
The weight \(k_1 > 0\) combines the objective of minimizing the distance to the subgoal \(d(\vx_W, \mathbf{g})\) with the original control cost function.

\paragraph{Path Following (MPCC)}
The optimization problem in each step is:
\begin{subequations}
	\begin{align}
		\min_{ \seqX_W, \seqU_{W}, \alpha}        & - k_1 \alpha + k_2 d(\vx_W , \pi(\alpha))^2 + \nonumber                       \\
		                                          & \Delta t \sum_w \, j(\vx_w,\vu_w) \,,                                         \\
		\text{s.t.
		}                                   \quad & 0 \leq \alpha \leq 1 \,,                                                      \\
		                                          & \text{Constraints} ~ \eqref{eq:step}, \eqref{eq:u}, ~\text{and}~\eqref{eq:x}.
	\end{align}
\end{subequations}
The function \(\pi(\cdot): [0, 1] \to \mathcal{X}\) is a smooth parameterization of the initial guess (e.g., a spline through the waypoints), and the scalar variable \(\alpha\) indicates the progress on the path.
The term \(k_1 \alpha\), with \(k_1 > 0\), tries to maximize the progress along the path.
The term \(k_2 d(\vx_W , \pi(\alpha))^2\), with \(k_2 > 0\), minimizes the distance between the last state in the window and the progress on the path.
Together, these two terms push the last state \(\vx_W\) to make progress along the path while following it closely (note that, compared to other MPCC formulations, we only apply the contouring cost to the last state in the window).

\subsection{Algorithm for Trajectory Optimization}

All different approaches for trajectory optimization require solving nonlinear optimal control problems.
In our previous work \cite{hoenig2022dbAstar}, we used a direct control method, namely k-order optimization \cite{KOMO}, and the Augmented Lagrangian algorithm.
In this revised version, we switch to an indirect control method, Differential Dynamic Programming (DDP), which ensures precise dynamics during shooting and therefore more reliable convergence to locally optimal solutions in systems with complex dynamics.

Differential Dynamic Programming is a second-order method for solving optimal control problems of the form:
\begin{subequations}
	\begin{align}
		\min_{ \seqX, \seqU } & \sum_k c(\vx_k,\vu_k) + c_K(\vx_K)\,,                                          \\
		\text{s.t.
		} \quad               & \vx_{k+1} = \text{step}(\vx_{k},\vu_k) \quad \forall k \in \{0,\ldots,K-1\}\,, \\
		                      & \vx_0 = \vx_s\,.
	\end{align}
\end{subequations}
It iteratively computes a quadratic approximation of the cost-to-go using a backward pass and updates states and controls using a forward pass.
For more details, we refer to \cite{Crocoddyl}, \cite{ddp}.

To deal with collisions, goal constraints, and state and control bounds, we use a squared penalty method—adding all constraints in the cost term with a squared penalty.

In particular, we use \emph{feasibility-driven DDP} \cite{Crocoddyl}, which can be warm-started with an infeasible sequence of states and actions, providing a good balance between local convergence and globalization.

\section{Motion Primitives}
\label{sec:motion-primitives}

In our framework, we define a motion primitive as a valid trajectory that fulfills the dynamics, control, and state constraints, disregarding collisions with the environment.
A formal definition is provided in \cref{def:motion-primitive} in \cref{sec:Discontinuity-Bounded-Search}.

\subsection{Generation of Locally Optimal Motion Primitives}

In the problem setting outlined in \cref{sec:problem_description}, a key observation is that motion primitives can be precomputed because they are independent of the collision-free space and the start and goal configurations of a particular motion planning problem.

Our algorithms, iDb-A* and Db-A*, are agnostic about how the primitives have been generated.
However, the theoretical properties of the algorithms depend on the properties of the set of primitives (\cref{sec:theory}).
In our implementation of iDb-A*, we use locally optimal motion primitives computed with trajectory optimization.

To generate motion primitives, we solve two-point boundary value problems with random start and goal configurations in free space using nonlinear optimization.
In contrast to the typical approach in sampling-based motion planning of sampling control sequences at random, our strategy results in a superior primitive distribution, especially for systems with unstable dynamics (e.g., flying robots).
Our approach achieves better coverage of the state space and produces smoother and lower-cost motion primitives, which are key factors contributing to the success of the algorithm (see \cref{sec:experiments}).

Specifically, we generate motion primitives offline using the following three steps: First, uniform random sampling of start and goal configurations in free space; second, solving \cref{eq:motion-planning} with trajectory optimization using the hierarchical time search approach and a trivial initial guess; and third, splitting the resulting motion into multiple pieces of random length.

We observe that our strategy generates a good distribution of motion primitives.
However, it requires several hours of offline computation with a standard CPU.
In the case of flying robots, most of the time is spent attempting to find trajectories where the goal is not reachable within the given time horizon or where trajectory optimization fails to converge.
Our sampling approach might bias the primitives' distribution towards configurations that are easy to connect to other configurations.
However, the results in \cref{sec:experiments} confirm that the primitives are diverse and can solve a wide range of problems.

\subsection{Invariance and Equivariance in System Dynamics}
\label{sec:invariance}

Several robotic systems of interest exhibit symmetries and invariances in the dynamics that can be exploited to reduce the required number of motion primitives for planning.

Specifically, invariance/equivariance enables the adaptation of primitives on-the-fly to match some components of the state space exactly in our search algorithm, Db-A*.
This significantly reduces the number of primitives required to cover the state space, resulting in smaller discontinuities, reduced memory requirements, and faster nearest neighbor searches.

A prominent example is translation invariance, a property that holds for many mobile robots, such as differential-drives, cars, airplanes, and multirotors.
In these systems, we can decompose the state into two components, $\vx = [\vx^t, \vx^r ]$, where $\vx^t$ represents translations and $\vx^r$ contains rotations and possibly velocities.
The dynamics $f(\vx,\vu)=f(\vx^r,\vu)$ only depend on the non-translation part of the state.

We are interested in invariances that preserve optimality.
For instance, the translation of a primitive in translation invariant systems retains optimality for a running cost of type $j(\vx,\vu) = r(\vu)$, e.g., a minimum time trajectory $r(\vu) = 1$, or minimum control effort $r(\vu) = \norm{\vu}^2$.

\begin{example}[Translation Invariance in the Unicycle]
	\label{ex:unicycle:invariance}
	Consider the unicycle from \cref{ex:unicycle}.
	The dynamics \( f(\vx,\vu) = f(\theta, \vu) = [v \cos(\theta) , v \sin(\theta) , w] \) depend only on the orientation \(\theta\), but not on the position \([x,y]\).
	Using translation invariance, we can translate a motion primitive \( m=( \seqX,\seqU,\vx_s,\vx_f, c) \) with \( \mathbf{t} \in \RR^2 \), resulting in \( m \oplus \mathbf{t} = m' = ( \seqX',\seqU'=\seqU,\vx_s',\vx_f', c'=c) \).
	The states in \( \seqX' \), \( \vx_s' \), and \( \vx_f' \) are transformed as follows:
	\begin{subequations}
		\begin{align}
			\vx'[x]      & = \vx[x] + \mathbf{t}[x] \,, \\
			\vx'[y]      & = \vx[y] + \mathbf{t}[y] \,, \\
			\vx'[\theta] & = \vx[\theta]\,.
		\end{align}
	\end{subequations}
	The operator \( [\bullet] \) indicates the $\bullet$-component of a state \( \vx \) or translation vector \( \mathbf{t} \) (e.g., \( \vx[x] \in \mathbb{R} \) is the ``x-component'' of the state \( \vx \)).
\end{example}

In some second-order systems, such as the quadrotor, the acceleration depends only on the orientation and the angular velocity but is invariant to both translation and linear velocity.
Thus, we can modify primitives to match the starting position and velocity.

\begin{example}[Translation and Linear Velocity Invariance in the Quadrotor]
	\label{ex:quad:invariance}
	The second-order dynamics of the quadrotor from \cref{ex:quadrotor} depend only on the rotation and angular velocity \( \mathbf{q}, \mathbf{w} \) but not on the position \( \mathbf{p} \) or linear velocity \( \mathbf{v} \).
	We can transform a motion primitive \( m=( \seqX,\seqU,\vx_s,\vx_f, c) \) with \( \mathbf{t} = [\mathbf{t}_p, \mathbf{t}_v] \), \( \mathbf{t}_p \in \RR^{3} \), \( \mathbf{t}_v \in \RR^{3} \), resulting in \( m \oplus \mathbf{t} = m' = ( \seqX',\seqU'=\seqU,\vx_s',\vx_f', c'=c) \).
	The states in \( \seqX' \), \( \vx_s' \), and \( \vx_f' \) are transformed as follows:
	\begin{subequations}
		\begin{align}
			\vx'[v]   & = \vx[v] + \mathbf{t}_v,                              \\
			\vx'[w]   & = \vx[w],                                             \\
			\vx'[q]   & = \vx[q],                                             \\
			\vx'_k[p] & = \vx_k[p] + \mathbf{t}_p + k \mathbf{t}_v \Delta t .
		\end{align}
	\end{subequations}
\end{example}

While invariance and equivariance are advantageous properties, they require an individual study of each new dynamical system.
For simplicity, in our work, we focus only on translation invariance and linear velocity invariance as two classical and ubiquitous properties.
Additional properties, such as rotation invariance (for car-like robots), rotation symmetries (for 3D quadcopters), or angular rotational invariance (for planar multirotors), could be exploited to improve performance in particular systems.

\section{Theoretical Properties}
\label{sec:theory}

In this section, we analyze the theoretical properties of iDb-A* and argue that it is asymptotically optimal under very mild assumptions.
Intuitively, given enough computational time, iDb-A* (\cref{alg:overview}) will compute the optimal solution, because in each iteration, we add more primitives and reduce the allowed discontinuity \(\delta\), eventually producing an initial guess that is close to the optimal solution, which is then locally repaired and optimized with trajectory optimization.

\subsection{Kinodynamic Motion Planning}

\begin{assumption}
	\label{assu:delta_robust}
	The dynamics function \cref{eq:dynamics} has a bounded Lipschitz constant for both states and controls.
	Moreover, there exists a \(\delta\)-robust trajectory that solves the kinodynamic motion planning problem.
\end{assumption}
\emph{Note:}
A \(\delta\)-robust trajectory has both obstacle clearance and dynamical clearance of \(\delta\).
These assumptions are standard in kinodynamic motion planning, and we refer to \cite{ST-RRT-Star} for a formal definition.

\begin{remark}
	In a system with Lipschitz dynamics, the discrete-time dynamics converge to the continuous-time dynamics as the time step discretization approaches zero.
	Thus, we limit our study to the time-discretized system \cref{eq:dynamics_discrete} used in our framework.
\end{remark}

\subsection{Db-A*}

\begin{theorem}
	Sequences \(\seqX\) and \(\seqU\) returned by Db-A* (\cref{alg:dbAstar}) are \(\delta\)-discontinuity bounded solutions to the given motion planning problem (\cref{definition:discontinuityBounded}).
\end{theorem}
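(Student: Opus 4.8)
The plan is to verify directly that the output of Db-A* satisfies each of the five conditions in \cref{definition:discontinuityBounded}. This is essentially a correctness-of-construction argument: I would trace through \cref{alg:dbAstar} and check that every state/control pair placed into the returned sequences $\seqX_d, \seqU_d$ inherits the required properties from the way nodes are expanded and from the termination test. The key observation is that the returned trajectory is obtained by concatenating the state/control subsequences of the motion primitives along the path from $\vx_s$ to the terminal node (traced back via the parent pointers $p$ and action labels $a$).

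First I would establish the dynamics bound \eqref{eq:disBound:dynamics}. Within a single motion primitive, \cref{def:motion-primitive} guarantees $\vx_{k+1} = \step(\vx_k, \vu_k)$ exactly, so the discontinuity there is zero. The only places a discontinuity is introduced are at the \emph{junctions} between consecutive primitives. When primitive $m$ is selected to expand a node at state $n.\vx$ (\cref{alg:dbAstar:Mprime}), the query guarantees $d(n.\vx, m.\vx_s) \leq \alpha\delta$; and when a tentative new state is merged into an existing node within $(1-\alpha)\delta$ (\cref{alg:dbAstar:nn,alg:dbAstar:update}), a second discontinuity of magnitude at most $(1-\alpha)\delta$ is introduced. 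I would argue that at each junction at most one of these two effects occurs per transition, or that together they sum to at most $\alpha\delta + (1-\alpha)\delta = \delta$ via a triangle-inequality argument on $d$ (using that $\langle \sX, d\rangle$ is a metric space). This accounting is the crux of why the single parameter $\delta$ bounds the total stitching error, and it is where the roles of $\alpha$ and $1-\alpha$ must be combined carefully.

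Next I would dispatch the remaining conditions, which are more immediate. Condition \eqref{eq:disBound:U} follows because every control in the trajectory comes from some primitive, and primitives satisfy $\vu_k \in \sU$ by \cref{def:motion-primitive}. Condition \eqref{eq:disBound:Xfree} follows from the collision check at \cref{alg:dbAstar:collision}, which discards any primitive not lying in $\sX_{\mathrm{free}}$, so every state retained along the path is collision-free. Condition \eqref{eq:disBound:xs} holds because the search is rooted at a node with state $\vx_s$ (\cref{alg:dbAstar:Oinit}), giving $d(\vx_0, \vx_s) = 0 \leq \delta$. Finally, condition \eqref{eq:disBound:xf} is exactly the termination test at \cref{alg:dbAstar:sol}: the algorithm returns only when a popped node satisfies $d(n.\vx, \vx_g) \leq \delta$.

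\textbf{The main obstacle} I anticipate is making the junction discontinuity bookkeeping fully rigorous, particularly handling the node-merging step cleanly. When a tentative state $\vx'$ is pruned in favor of an existing node $s$ within $(1-\alpha)\delta$, the trajectory effectively ``jumps'' from $\vx'$ to $s.\vx$, and I must ensure this relabeling is consistently propagated so that the reconstructed sequence genuinely has each inter-primitive gap bounded by $\delta$ rather than accumulating across multiple merges. I would formalize this by defining the returned states precisely as the stored node states, and the controls as those of the attached primitives, then verifying that each consecutive pair $(\vx_k, \vx_{k+1})$ in $\seqX_d$ is either an exact intra-primitive step (zero discontinuity) or a junction whose discontinuity decomposes as $d(\vx_k, m.\vx_s) + d(m.\vx_f, \vx_{k+1}) \leq \alpha\delta + (1-\alpha)\delta$ by the two nearest-neighbor radius guarantees, never chaining more than these two contributions at a single junction.
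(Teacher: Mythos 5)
Your overall strategy is the same as the paper's: verify the five conditions of \cref{definition:discontinuityBounded} directly, dispatch \eqref{eq:disBound:U}, \eqref{eq:disBound:Xfree}, \eqref{eq:disBound:xs}, and \eqref{eq:disBound:xf} from the primitive definition, the collision check at \cref{alg:dbAstar:collision}, the root node, and the termination test, and bound the junction error by a triangle inequality that combines the expansion radius \( \alpha\delta \) with the duplicate-detection radius \( (1-\alpha)\delta \). That triangle inequality is indeed the crux, and you identify both contributions correctly, as well as the right worry (that merges must not let errors accumulate).

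The gap is in your final formalization. You propose to take ``the stored node states'' as the returned states and to charge both radii to a single consecutive pair via \( d(\vx_k, m.\vx_s) + d(m.\vx_f, \vx_{k+1}) \). This does not work as written, for two reasons. First, those two radii live at opposite ends of the primitive \( m \), i.e., at time steps separated by the whole primitive, so they never appear in one per-step condition \eqref{eq:disBound:dynamics}. Second, and more seriously, if a node state \( n.\vx \) is itself an element of \( \seqX_d \), the next condition to check is \( d(\vx_{k+1}, \step(n.\vx, \vu_0)) \leq \delta \), where \( \vu_0 \) is the first control of \( m \); the metric bound \( d(n.\vx, m.\vx_s) \leq \alpha\delta \) says nothing about distances between images under \( \step \), so you would need a Lipschitz bound on \( \step \), and even then you would get roughly \( (1+L\Delta t)\alpha\delta \) for that transition and overshoot \( \delta \) overall. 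The paper avoids this entirely: the returned sequence is the concatenation of the primitive states themselves, \( \seqX = [m_1.\seqX_{[:-1]}, \ldots, m_N.\seqX] \), so every transition is either an exact primitive step or the pure metric gap \( d(m_{i+1}.\vx_s, m_i.\vx_f) \) at a junction, and the node state enters only as the auxiliary midpoint of the triangle inequality \( d(m_i.\vx_f, m_{i+1}.\vx_s) \leq d(m_i.\vx_f, n.\vx) + d(n.\vx, m_{i+1}.\vx_s) \leq (1-\alpha)\delta + \alpha\delta = \delta \). The first term is bounded because \( n \) was created at, or merged within \( (1-\alpha)\delta \) of, the endpoint of the motion that reached it, and \cref{alg:dbAstar:update} never changes a node's state (only \( g \), parent, and action), so nothing accumulates across repeated merges. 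Note also that with this formalization your start condition changes: \( \vx_0 = m_1.\vx_s \), not \( \vx_s \), so \eqref{eq:disBound:xs} holds with bound \( \alpha\delta \leq \delta \) rather than being exactly zero.
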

\begin{proof}
	When Db-A* terminates, we trace back the solution by following the parent pointers and obtain a sequence of motion primitives \([m_1, \ldots, m_N]\).
	This sequence defines the sequence of controls \(\seqU = [m_1.
	\seqU, \ldots, m_N.\seqU]\), and states \(\seqX = [m_1.\seqX_{[:-1]}, \ldots, m_N.\seqX]\), i.e., we take all but the last state for each primitive except for the last one, which should also include the last state.
	During Db-A*, we expand a node \(n\) with a motion \(m\) if the start state \(m.
	\vx_s\) is at most \(\alpha\delta\) away from the current state \(n.\vx\).
	Any node \(n\) is reached by a motion \(\tilde{m}\) that ends at point \(\tilde{m}.
	\vx_f\), which is at most \((1-\alpha)\delta\) from \(n.\vx\).
	Using the triangle inequality of the metric space, \(d(\tilde{m}.
	\vx_f , m.\vx_s) \leq d(\tilde{m}.\vx_f , n.\vx) + d(n.\vx, m.\vx_s) \leq \delta\).
	Thus, \cref{eq:disBound:dynamics} holds for all connections between motion primitives, and \cref{eq:disBound:xs} holds for the state in the first motion.
	We already know that \(\vx_k \in \sX\), \(\vu_k \in \sU\) for all \(k\) \cref{eq:disBound:U}.
	Motions are only used as edges if the entire motion is within \(\sX_{\mathrm{free}}\), thus \cref{eq:disBound:Xfree} holds.
	Finally, \cref{eq:disBound:xf} holds by the termination conditions.
\end{proof}

\begin{definition}
	Given a start state $\vx_s$ and a goal state $\vx_g$, a set of motion primitives $\mathcal{M}$, and a discontinuity bound $\delta$, we define the implicit graph $G_{\delta,\mathcal{M}} = (V,E)$ with,
	\begin{subequations}
		\begin{align}
			V & = \bigcup_{m\in\mathcal M} \{m.
			\vx_s,  m.\vx_f \}
			\cup \{\vx_s, \vx_g\}  \,,                                                                                      \\
			E & = \{ (m.\vx_s, m.\vx_f) ~ | ~  m \in \mathcal{M} \}\, \cup   \{ (m.\vx_f , m'.\vx_s)  \nonumber             \\                                                                                                            & ~ | ~  m,m' \in \mathcal{M} ~ \textbf{and} ~ d(m.\vx_f , m'.\vx_s) \leq \delta \}
			\nonumber                                                                                                       \\                                                                                                          &
			\cup \{ (\vx_s, m.\vx_s) ~ |  ~  m \in \mathcal{M}  ~\textbf{and} ~ d(\vx_s , m.\vx_s) \leq \delta \} \nonumber \\                                                                                                          &
			                                                                                                           \cup \{ ( m.\vx_f, \vx_g) ~|  ~  m \in \mathcal{M} ~\textbf{and} ~ d(m.\vx_f , \vx_g) \leq \delta \}.
		\end{align}
	\end{subequations}
\end{definition}

\begin{remark}
	\label{th:incomplete}
	For an arbitrary fixed value of \( \delta > 0 \), Db-A* is incomplete and suboptimal when searching on the implicit graph \( G_{\mathcal{M},\delta} \).
\end{remark}
\begin{proof}
	Consider an example where a robot has to move through a narrow door to navigate to an adjacent room.
	Even if a \( \delta \)-discontinuity bounded solution using the available primitives exists (i.e., a path in the graph $G_{\delta,\mathcal{M}}$), Db-A* may not find it because motions are expanded in a random order and only if no previous node is within distance \( (1-\alpha)\delta \).
	This could potentially prune a node that is required in the solution.
	Since Db-A* is incomplete, it cannot guarantee that no better \( \delta \)-discontinuity bounded solution exists once it finds one.
\end{proof}

\begin{theorem}
	\label{theo:dbastar-complete}
	For a given set of motion primitives \( \mathcal{M} \), there exists a \( \delta >0 \) such that Db-A* is complete and optimal when searching in the implicit graph \( G_{\mathcal{M}, \delta} \).
\end{theorem}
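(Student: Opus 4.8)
The plan is to exploit the finiteness of the primitive set $\sM$ to choose $\delta$ so small that the duplicate-detection step can never merge two genuinely distinct states, which is exactly the mechanism responsible for the incompleteness established in \cref{th:incomplete}. First I would collect the vertex set of $G_{\sM,\delta}$, namely $V = \bigcup_{m\in\sM}\{m.\vx_s, m.\vx_f\}\cup\{\vx_s,\vx_g\}$. Since $\sM$ is finite, $V$ is finite, so the minimum separation between distinct vertices, $d_{\min} = \min\{d(v,v') : v,v'\in V,\ v\ne v'\}$, is a strictly positive real number. I would then fix any $\delta$ with $\delta < d_{\min}$, which also forces $\alpha\delta < \delta < d_{\min}$ and $(1-\alpha)\delta < \delta < d_{\min}$.

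Next I would extract two consequences of this choice. For duplicate detection: a tentative state $\vx' = m.\vx_f$ is pruned or redirected toward a discovered node $s$ only if $d(\vx', s.\vx) \le (1-\alpha)\delta < d_{\min}$; since both points lie in $V$, this forces $\vx' = s.\vx$, so Db-A* only ever identifies coincident states and assigns exactly one node to each element of $V$. For the edge structure: a motion $m$ is applicable at a state $v\in V$ iff $d(v, m.\vx_s) \le \alpha\delta < d_{\min}$, i.e.\ iff $v = m.\vx_s$; this is precisely the zero-distance inter-primitive edge present in the edge set $E$ of $G_{\sM,\delta}$, and the same collapse applies to the start connection and to the termination test $d(n.\vx,\vx_g)\le\delta$. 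On all such coincident connections the discontinuity-cost term $l(\cdot,\cdot)$ of \cref{alg:dbAstar:gt} vanishes, so the $g$-values maintained by Db-A* equal the genuine path costs in $G_{\sM,\delta}$.

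With these facts, Db-A* reduces exactly to standard A* on the finite, static graph $G_{\sM,\delta}$ with non-negative edge weights: each vertex is expanded as its own node, the reopening of closed nodes via the \texttt{UpdateNode} branch restores optimal $g$-values under an admissible heuristic (the \emph{Euclidean}, \emph{Roadmap}, and \emph{Blind} options are all lower bounds), and the random expansion order becomes immaterial. Completeness and optimality of Db-A* on $G_{\sM,\delta}$ then follow from the classical correctness of A* on finite graphs. The hard part is the reconciliation of the three different radii the algorithm uses—the applicability radius $\alpha\delta$, the pruning radius $(1-\alpha)\delta$, and the single radius $\delta$ defining $E$; the key observation that dissolves this difficulty is that finiteness of $\sM$ yields a positive gap between zero and the smallest positive inter-vertex distance, and driving $\delta$ below that gap collapses all three radii to the single relation ``coincident points,'' degenerating every equivalence class of the duplicate detection to a singleton.
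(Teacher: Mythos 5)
Your proof is correct, and it pursues the same root idea as the paper---shrink \( \delta \) until duplicate detection can no longer discard states that matter---but you execute it far more concretely than the paper does. The paper's own proof is a short dichotomy argument: if some \( \delta_0 \) renders Db-A* incomplete on \( G_{\mathcal{M},\delta_0} \), one can keep reducing \( \delta \) until either the solution nodes are no longer pruned or the graph becomes unsolvable (in which case completeness holds vacuously); it never exhibits a concrete threshold nor explains why a single positive \( \delta \) resolving the dichotomy must exist. You supply exactly that missing ingredient: finiteness of \( \mathcal{M} \) gives a positive minimum separation \( d_{\min} \) among the finitely many vertices, and any \( \delta < d_{\min} \) collapses the applicability radius \( \alpha\delta \), the pruning radius \( (1-\alpha)\delta \), and the edge radius \( \delta \) to exact coincidence, so Db-A* becomes literally A* (with reopening) on a finite graph with non-negative edge weights, and both branches of the paper's dichotomy are handled uniformly by classical A* correctness. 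Your observation that \( l(v,v)=0 \) (since \( l \) is a non-negative lower bound on a zero true cost) is also a detail the paper never spells out. What the paper's looser argument buys in exchange is the suggestion that one should stop at the \emph{largest} \( \delta \) for which pruning is harmless, keeping \( G_{\mathcal{M},\delta} \) solvable; your \( \delta < d_{\min} \) choice typically degenerates the graph to exact-coincidence stitching, which with randomized primitives is almost surely unsolvable---both proofs are valid for the theorem as stated, and both you and the paper correctly flag that the statement is therefore mainly of asymptotic value. One minor caveat: your parenthetical claim that the \emph{Roadmap} heuristic is a lower bound is not obviously true (shortest paths on a sparse roadmap can exceed the true kinodynamic cost-to-go); the argument only needs \emph{some} admissible heuristic, so it is safer to cite the \emph{Blind} or \emph{Euclidean} heuristic there.
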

\begin{proof}
	If the choice of a given \( \delta_0 \) renders Db-A* incomplete on the graph \( G_{\mathcal{M}, \delta_0} \), we can always reduce \( \delta \), resulting in a different graph, where the nodes in the solution are either not pruned during the Db-A* search, or the graph becomes unsolvable.
	Note that since decreasing \( \delta \) may render the graph unsolvable, this property is valuable in an asymptotic setting, used to demonstrate asymptotic optimality when incrementally adding more motion primitives.
\end{proof}

\subsection{Optimization}
\begin{assumption}(Convergence of the Optimizer)
	\label{assu:convergence_opti}
	Let \( (\seqX_{d},\seqU_{d}) \) be a \( \delta \)-discontinuity bounded solution (\cref{definition:discontinuityBounded})
	to the planning problem.
	Then, there exists a (small) \( \delta > 0 \) such that trajectory optimization converges to a locally optimal solution $(\seqX^{*} , \seqU^{*})$ of the original kinodynamic motion planning problem \cref{eq:motion-planning}.
\end{assumption}

\emph{Discussion:}
Nonlinear constrained optimization algorithms for trajectory optimization have convergence guarantees toward stationary points: these are either points that satisfy the first-order optimality conditions—thereby converging to a locally optimal feasible solution; or points that locally minimize the constraint violation.

In general, these methods typically require the dynamics, distance function, and collision constraints to have smoothness or bounded derivatives.
The exact conditions for convergence can vary slightly among different trajectory optimization algorithms, such as Differential Dynamic Programming \cite{murray1984differential}, Sequential Convex Optimization \cite{bonalli2019gusto}, the Augmented Lagrangian \cite{fernandez2012local}, and Interior Points \cite{wachter2005line}.

Specifically, we assume that for a small \( \delta \), the initial guess is close to a feasible solution, thereby ensuring that the optimizer converges to a feasible and locally optimal solution.
We argue that this is a very reasonable assumption, especially when the optimal solution is
a \( \delta \)-robust trajectory (\cref{assu:delta_robust}), as often assumed in the literature.
The experimental success rate of our planner (\cref{sec:experiments}) also supports this assumption.

\subsection{Motion Primitives}

Our optimization-based approach for generating motion primitives, as well as propagation of random control inputs from random starting points, creates a set of motion primitives that asymptotically covers the state space, which will be required to prove the asymptotic optimality of iDb-A*.

\begin{definition}
	\label{def:cover}
	A set of primitives $\mathcal{M}$ covers the state space with discontinuity $\epsilon >0$ if and only if, for all pairs of states $\vx,\vx' \in \mathcal{X}$, there exists a $\epsilon$-discontinuity bounded trajectory from $\vx$ to $\vx'$ (\cref{definition:discontinuityBounded}) using the primitives $m\in \mathcal{M}$.
\end{definition}

\begin{definition}
	\label{def:asym-cover}
	A method to generate motion primitives asymptotically covers the state if, for every $\tilde{\epsilon} > 0$, there exists a \emph{finite} set of primitives $ \tilde{\mathcal{M}} $, $ | \tilde{\mathcal{M}} | < \infty $ such that \cref{def:cover} holds.
	In other words, asymptotic coverage means that if $|\mathcal{M}| \to \infty $, then $ \epsilon \to 0 $ in \cref{def:cover}.
\end{definition}

\begin{remark}
	\label{th:primitives}
	Motion primitives generated with our randomized optimization-based approach, as well as rollouts of random control inputs from random starting points, asymptotically covers the state space.
\end{remark}

\subsection{iDb-A*}

So far, we have demonstrated that there exists a small discontinuity bound \(\delta > 0\) such that Db-A* finds an optimal discontinuity bounded trajectory (if one exists), and that trajectory optimization can be used to repair a \(\delta\)-discontinuity bounded solution into a locally optimal solution.

To prove asymptotic optimality for iDb-A*, we use techniques from sampling-based motion planning to show that, as we increase the number of primitives, the solution of Db-A* converges towards an optimal discontinuity bounded solution (and thus, after optimization, we converge towards the optimal solution).
In \cref{alg:overview}, we iteratively reduce \(\delta\) to achieve good anytime behavior.
For the proof, it is sufficient to assume that we are using a small fixed discontinuity \(\tilde{\delta} > 0\) such that \cref{theo:dbastar-complete} and \cref{assu:convergence_opti} hold.

\begin{theorem}
	\label{theorem:ao}
	If the set of motion primitives asymptotically optimally covers the state space with discontinuity \(\tilde{\delta}\), and assuming convergence of the optimizer, iDb-A* (\cref{alg:overview}) is asymptotically optimal, i.e., \begin{equation} \lim_{n\to\infty} P(\{ c_n - c^* > \epsilon \}) = 0, \; \forall \epsilon > 0, \end{equation} where \(c_n\) is the cost in iteration \(n\) and \(c^*\) is the optimal cost.
\end{theorem}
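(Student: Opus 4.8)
The plan is to combine the three results already established—that Db-A* returns a $\tilde\delta$-discontinuity bounded solution and is complete and optimal on the implicit graph (\cref{theo:dbastar-complete}), that such a solution can be optimized into a locally optimal feasible trajectory (\cref{assu:convergence_opti}), and that the primitive set asymptotically covers the state space (\cref{def:asym-cover}, \cref{th:primitives})—into a single limiting argument. First I would fix a small $\tilde\delta > 0$ for which both \cref{theo:dbastar-complete} and \cref{assu:convergence_opti} hold, and invoke \cref{assu:delta_robust} to obtain an optimal $\tilde\delta$-robust trajectory $\tau^* = (\seqX^*, \seqU^*)$ with cost $c^*$. The goal is to show that, with probability tending to one, iDb-A* eventually produces a feasible trajectory whose cost is within $\epsilon$ of $c^*$; monotonicity of the reported cost (since the algorithm keeps $c_{\mathrm{max}} \leftarrow \min(c_{\mathrm{max}}, J(\seqX,\seqU))$) then upgrades this to the stated limit.

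The core is a tracking construction. Fixing any $\epsilon' > 0$, I would partition $\tau^*$ into short consecutive segments and argue, using the covering property (\cref{def:cover}, \cref{def:asym-cover}), that once the randomized primitive set $\mathcal{M}_i$ is large enough its covering discontinuity drops below $\tilde\delta$, so that there exist primitives stitching the segment endpoints with discontinuity at most $\tilde\delta$. Because $\tau^*$ has obstacle and dynamical clearance $\tilde\delta$, this stitched path stays in $\sX_{\mathrm{free}}$, and because the dynamics are Lipschitz and the running cost is of the admissible additive form, its total cost differs from $c^*$ by at most $\epsilon'$ for $\tilde\delta$ small and the segmentation fine. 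This produces a $\tilde\delta$-discontinuity bounded solution of cost at most $c^* + \epsilon'$ living in the implicit graph $G_{\mathcal{M}_i,\tilde\delta}$. Since Db-A* is complete and optimal on that graph for the fixed $\tilde\delta$, the solution it returns has discontinuity-bounded cost at most $c^* + \epsilon'$.

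Next I would push this discontinuity-bounded solution through the optimizer. By \cref{assu:convergence_opti}, for the fixed small $\tilde\delta$ the optimizer converges to a feasible, locally optimal trajectory; because the initial guess lies within $\tilde\delta$ of the globally optimal $\tau^*$, the resulting feasible cost $c_i$ satisfies $c^* \le c_i \le c^* + \epsilon' + \eta(\tilde\delta)$, where $\eta(\tilde\delta) \to 0$ captures the gap between the discontinuity-bounded cost and the repaired feasible cost. Choosing $\epsilon'$ and $\tilde\delta$ so that $\epsilon' + \eta(\tilde\delta) < \epsilon$ yields a feasible solution with $c_i - c^* < \epsilon$.

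Finally, the probabilistic conclusion follows from the random generation of primitives: by \cref{th:primitives} the event ``$\mathcal{M}_i$ contains the stitching primitives built above'' has probability approaching one as $i \to \infty$, since the number of primitives grows geometrically and random rollouts cover every neighborhood of the finite set of segment endpoints. On this event the reported cost is already below $c^* + \epsilon$, and monotonicity of $c_n$ keeps it there, giving $\lim_{n\to\infty} P(\{c_n - c^* > \epsilon\}) = 0$. I expect the main obstacle to be the middle step: rigorously controlling the cost gap $\eta(\tilde\delta)$ introduced when trajectory optimization repairs the discontinuities, i.e., certifying that the locally optimal solution the optimizer reaches is genuinely near the global optimum $\tau^*$ rather than a distant local minimum—this is precisely where the $\tilde\delta$-robustness of $\tau^*$ and the closeness of the Db-A* initial guess must be used carefully, and where \cref{assu:convergence_opti} does the heavy lifting.
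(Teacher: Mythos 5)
Your proposal is sound in spirit but takes a genuinely different route from the paper. The paper's proof is a stochastic-process argument borrowed from sampling-based planning (AO-RRT): it defines the suboptimality random variables \(S_n = c_n - c^*\), asserts that under asymptotic coverage each iteration improves the solution in expectation by a constant factor, \(E[S_n \mid S_{n-1}] \leq (1-\omega)S_{n-1}\) with \(\omega > 0\), iterates the expectation to get \(E[S_n] \leq (1-\omega)^{n-1}E[S_1]\), and concludes via Markov's inequality that \(P(S_n > \epsilon) \to 0\). Your proof instead makes the mechanism explicit: a deterministic tracking construction along the \(\tilde\delta\)-robust optimal trajectory, combined with \cref{theo:dbastar-complete}, \cref{assu:convergence_opti}, the coverage event having probability tending to one, and monotonicity of \(c_n\). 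Each approach buys something: the paper's argument is shorter and yields a geometric decay rate in expectation, but it buries the hard work inside the unproven contraction constant \(\omega\) (which would itself require essentially your tracking construction to justify) and implicitly needs \(E[S_1] < \infty\) for the Markov step to have content; your argument avoids expectations entirely (so no finiteness issue when early iterations fail) and uses the established lemmas more directly, but yields no convergence rate. One caution on your construction: the paper's \cref{def:cover} only guarantees existence of \emph{some} \(\epsilon\)-discontinuity bounded trajectory between each pair of states, with no bound on its cost or on how far it strays from the segment it replaces, so your claims that the stitched path stays within the obstacle-clearance tube of \(\tau^*\) and that its cost is within \(\epsilon'\) of \(c^*\) need a strengthened \emph{local} covering property (nearby states connected by short, low-cost primitive chains); you correctly flag the neighboring gap \(\eta(\tilde\delta)\) as the crux, and the paper's own proof glosses over the same point when it asserts the existence of \(\omega\).
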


\begin{proof}
	We closely follow the proof strategy from previous work in sampling-based motion planning \cite[Th.
		3]{AO-RRT}.
	Let \(S_1,\ldots,S_n\) be random variables denoting the suboptimality \(c_n - c^*\).
	In every iteration of \cref{alg:overview}, we either reduce the cost if we find a new solution or we remain at the same cost, i.e., \(c_{n+1} \leq c_n\).
	We now aim to show that, with sufficient motion primitives, the solution of Db-A* will be close to the true solution, and thus can be correctly optimized by trajectory optimization.
	Crucially, in each iteration of iDb-A*, there is a positive probability that the new primitives will improve the solution, \(E[S_n|S_{n-1}] \leq (1-\omega)S_{n-1}\), i.e., in expectation, the solution improves by at least a constant amount \(\omega > 0\) every iteration.
	This nonzero probability only holds if the motion primitives asymptotically cover the entire state space with discontinuity \(\tilde{\delta}\) (\cref{def:asym-cover}).
	Then, we have
	\begin{align}
		E[S_n] & = \int E[S_n | S_{n-1}] P(S_{n-1}) dS_{n-1}               \\
		       & \leq (1-\omega) \int S_{n-1} P(S_{n-1}) dS_{n-1}\nonumber \\
		       & = (1-\omega)E[S_{n-1}] = (1-\omega)^{n-1} E[S_1].
		\nonumber
	\end{align}
	Applying the Markov inequality, we have \(P(S_n > \epsilon) \leq E[S_n]/\epsilon = (1-\omega)^{n-1} E[S_1] / \epsilon\), which approaches 0 as \(n\) approaches infinity.
\end{proof}

It remains an open question what the theoretical convergence rate of our proposed algorithm is, a property that is known for some sampling-based planners \cite{AO-RRT-Analysis}.
Empirically, we have shown that our initial solution is often much closer to the optimum compared to our baselines, and that the region of attraction for trajectory optimization is large enough to plan with a few primitives and large discontinuity bounds.

\section{Experimental Evaluation}

\label{sec:experiments}

\begin{figure*}[ht]
	\centering
	\begin{subfigure}[b]{.17\textwidth}
		\centering
		\includegraphics[width=.95\linewidth]{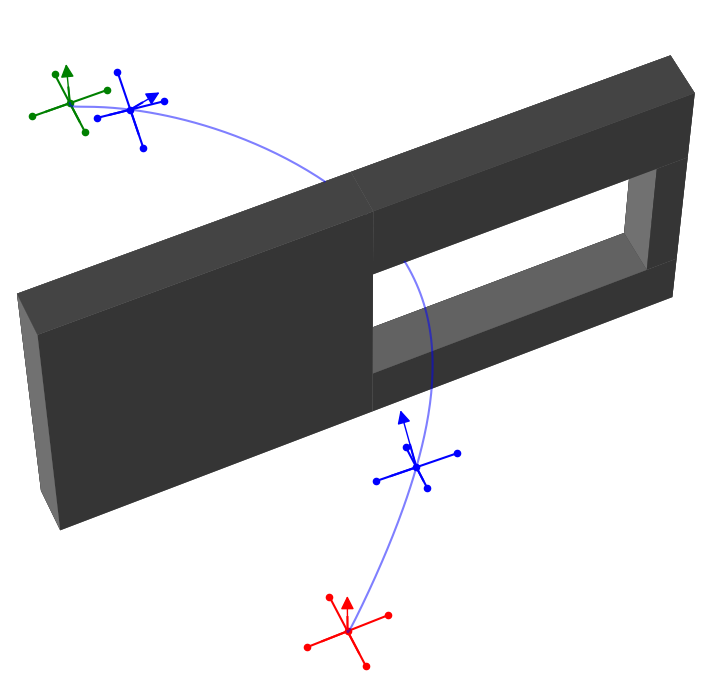}
		\caption{\label{fig:dyno:quad3d}}
	\end{subfigure}%
	\begin{subfigure}[b]{.17\textwidth}
		\centering
		\includegraphics[width=.95\linewidth]{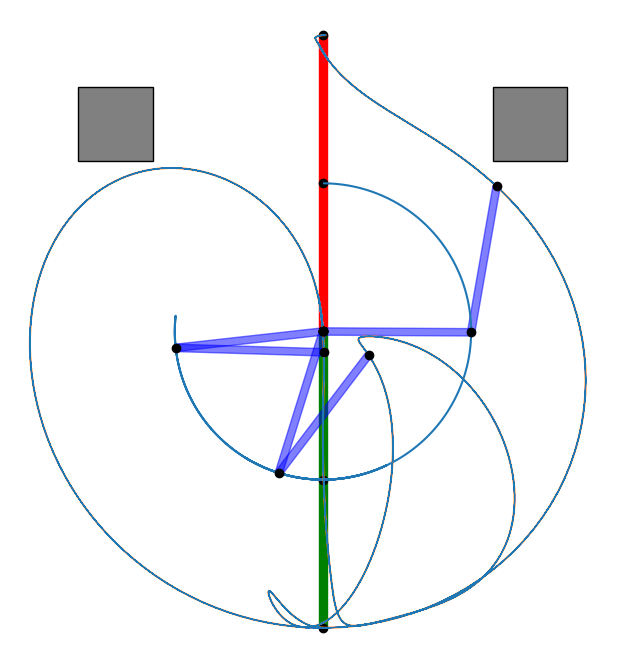}
		\caption{\label{fig:dyno:acrobot}}
	\end{subfigure}%
	\begin{subfigure}[b]{.17\textwidth}
		\centering
		\includegraphics[width=.95\linewidth]{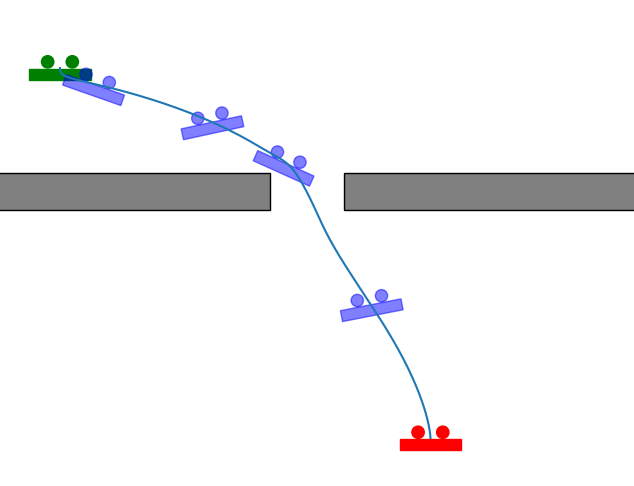}
		\caption{\label{fig:dyno:quad2d}}
	\end{subfigure}%
	\begin{subfigure}[b]{.17\textwidth}
		\centering
		\raisebox{5mm}{
			\includegraphics[width=.99\linewidth]{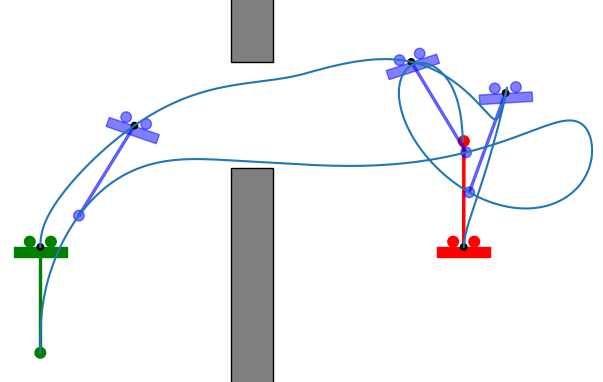}}
		\caption{\label{fig:dyno:quad2dpole}}
	\end{subfigure}%
	\begin{subfigure}[b]{.17\textwidth}
		\centering
		\raisebox{5mm}{
			\includegraphics[width=.95\linewidth]{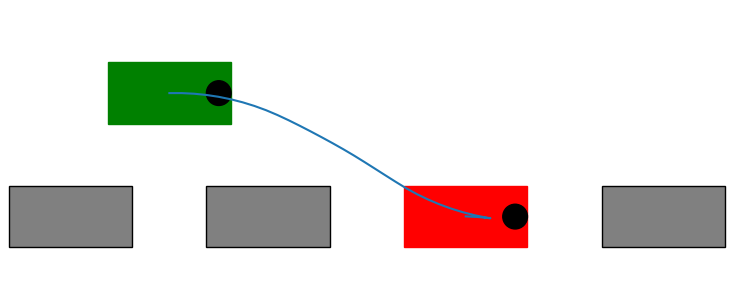}}
		\caption{\label{fig:dyno:uni1}}
	\end{subfigure}%
	\caption{Kinodynamic motion planning problems:
		Start and goal positions are represented in green and red, respectively.
		Obstacles are depicted in gray.
		Trajectories computed by iDb-A* are illustrated in blue.
		(a) \textit{Quadrotor v1 -- Window}, (b) \emph{Acrobot -- Swing up obstacles v1}, (c) \emph{Planar rotor -- Hole}, (d) \emph{Rotor pole -- Small window}, (e) \emph{Unicycle 1 v0 -- Park}.}
	\label{fig:overview}
\end{figure*}

We evaluate iDb-A* on 43 problems that include 8 different dynamical systems in various environments.
Most of the problems and systems are selected from previous work in kinodynamic motion planning \cite{SSTstar, shomeAsymptoticallyOptimalKinodynamic2021b, hoenig2022benchmarking, granados2022towards}.
Additionally, we include several problems that require dynamic and agile maneuvers with multirotors.

The benchmark problems are available in \textit{DynoBench}, our new benchmark library.
It provides a C++ implementation of all the dynamical systems (including dynamics with analytical Jacobians, state, and bound constraints), collision and distance computation with the Flexible Collision Library (FCL), the environments (in human-friendly YAML files), and visualization tools in Python.

Our implementation of iDb-A* and the other planners is available in our repository, along with the motion primitives and instructions to replicate the benchmark results.
A visualization of each problem and the corresponding solution trajectories computed by our algorithm is available on our website.

\subsection{Dynamical Systems and Environments}

We include a diverse range of dynamical systems and environments, featuring varying state dimensionality (from 3 to 14), number of underactuated degrees of freedom, and controllability.

\begin{enumerate}
	\item \textbf{Unicycle 1 (\(1^{\text{st}}\) order)} has a 3-dimensional state space \([x,y,\theta] \in \sX \subset \mathbb{R}^2 \times SO(2)\) and a 2-dimensional velocity control \([v, \omega] \in \sU \subset \mathbb R^2\) \cite{lavallePlanningAlgorithms2006}.
	      The three variants (\(\textup{v}0\), \(\textup{v}1\), \(\textup{v}2\)) use different control bounds.
	      See \cref{fig:overview:uni1,fig:dyno:uni1}.

	\item \textbf{Unicycle 2 (\(2^{\text{nd}}\) order)} has a 5-dimensional state space \([x,y,\theta,v,\omega] \in \sX \subset \mathbb R^4 \times SO(2)\) and a 2-dimensional acceleration control \([\dot{v}, \dot{\omega}] \in \sU \subset \mathbb R^2\) \cite{lavallePlanningAlgorithms2006}.
	      See \cref{fig:unicycle2_bugtrap}.

	\item \textbf{Car with trailer} has a 4-dimensional state space \([x,y,\theta_0, \theta_1] \in \sX \subset \mathbb R^2 \times SO(2)^2\) and a 2-dimensional control space \([v, \phi] \in \sU \subset \mathbb R^2\) (steering angle and velocity) \cite{lavallePlanningAlgorithms2006}.
	      See \cref{fig:overview:car}.

	\item \textbf{Acrobot} is a two-link planar manipulator actuated only at the middle joint \cite{underactuated}.
	      It requires long trajectories to swing up the two links.
	      See \cref{fig:dyno:acrobot}.

	\item \textbf{Quadrotor v0} has a 13-dimensional state space (position, orientation, and first-order derivatives) and a 4-dimensional control space (force for each of the four motors).
	      Dynamics are defined in \cref{ex:quadrotor}, and we use the parameters of the Crazyflie 2.1.
	      The low thrust-to-weight ratio of \num{1.3} is very challenging for kinodynamic motion planning, and harsh initial conditions prevent the use of specialized methods \cite{liuSearchBasedMotionPlanning2018,zhouRobustEfficientQuadrotor2019}.
	      See \cref{fig:overview:3d_quad}.

	\item \textbf{Quadrotor v1}.
	      The state space is the same as in \textit{Quadrotor v0}.
	      Controls are the total thrust and torques in the body frame, which make sampling-based methods more efficient but approximate real rotor-force limits.
	      We use the system parameters and model from the OMPL APP repository, but increase the control bounds to perform agile maneuvers.
	      See \cref{fig:dyno:quad3d}.

	\item \textbf{Planar rotor}.
	      The input is the force in each rotor \(\vu = [ f_1, f_2 ] \in \RR^2\) and the state space is 6-dimensional \( \vx = [ x, z, \theta , v_x, v_y , w ] \in \mathbb{R}^5 \times SO(2) \).
	      The thrust-to-weight ratio is also limited to \num{1.3}.
	      See \cref{fig:dyno:quad2d}.

	\item \textbf{Rotor pole} (Planar multirotor with pole) is a planar multirotor with an additional underactuated pendulum.
	      The control space is the same as in \emph{Planar rotor} (but with larger control bounds) and the state space has two additional degrees of freedom \([q,\dot{q}]\) for the pendulum.
	      See \cref{fig:dyno:quad2dpole}.
\end{enumerate}

All systems use the explicit Euler integration \eqref{eq:dynamics_discrete}, with \(\Delta t=\SI{0.1}{s}\) for all car-like robots, and \(\Delta t=\SI{0.01}{s}\) for the flying robots and the \textit{Acrobot}, due to the fast rotational dynamics.

For most car-like robots, we consider three environments (\textit{Kink}, \textit{Park}, \textit{Bugtrap}).
For the \textit{Acrobot} and \textit{multirotors}, we use environments with and without obstacles to evaluate performance in settings that require both aggressive maneuvers (e.g., recovering from upside-down positions) and navigation around obstacles.

\subsection{Algorithms}

Following our previous work \cite{hoenig2022dbAstar}, the goal of this benchmark is to compare methods for kinodynamic motion planning that use different methodologies: search, optimization, and sampling.
We compare our algorithm against state-of-the-art methods that have available open-source implementations.

For a \textbf{search-based} approach, we rely on SBPL (Search-based Planning Library), a commonly used C++ library.
We generate our own primitives and make minor adjustments to the heuristic to enable time-optimal anytime planning using the provided implementation of ARA* in SBPL.
SBPL requires the motion primitives to be connected without discontinuity and to span a lattice.
We limit our evaluation to dynamic models that are readily available in SBPL, namely the \textit{Unicycle 1 v0}.

For a \textbf{sampling-based} approach, we use SST*~\cite{SSTstar}, which is implemented in OMPL~\cite{OMPL} (Open Motion Planning Library).
Since sampling-based kinodynamic approaches cannot reach a goal state exactly, we use a goal region instead and run subsequent trajectory optimization with fixed terminal time to generate an exact solution to the goal.
The reported time does \emph{not} include the time spent in trajectory optimization, thus providing a favorable lower bound.

For \textbf{optimization-based} planning, we choose a classic combination of a geometric motion planner and a trajectory optimizer, which we call \ALGrrt in the following.
The motion planner generates an obstacle-free initial guess, ignoring the dynamics of the system, i.e., \emph{planning with a simplified model}, and the optimizer uses this trajectory as an initial guess.
As a motion planner, we use geometric RRT* (using the implementation in OMPL), which provides anytime behavior by incrementally improving the geometric trajectory.
Importantly, the probability of finding a feasible solution with optimization might increase when using the multiple geometric initial guesses provided by RRT*.
The initial guess provided by the geometric planner is collision-free but is often not accurate for dynamic constraints.
Therefore, for trajectory optimization with free terminal time, we use the strategy \textit{Search-T} \cref{eq:search-t}, which is more robust and provides better success in this planner than \textit{Free-dt}, later used in \ALGidbas.

For \textbf{iDb-A*}, we implement \cref{alg:overview} and \cref{alg:dbAstar} in C++.
As a heuristic \( h \), we use the Euclidean-based heuristic (e.g., distance divided by the upper bound of the speed, see \cref{sec:heuristic}), because it is general, fast to compute, and does not require precomputation.
For trajectory optimization, we use \textit{Free-dt} \eqref{eq:dt}, which provides a good balance between local optimality and computational time when using a good initial guess, as usually provided by Db-A*.
We provide an ablation study of the choices of heuristic and trajectory optimization strategy in \cref{sec:ablation}, where we analyze and compare different alternatives.

The rates that control the number of primitives and the discontinuity bound in each iteration of \ALGidbas (\cref{alg:overview}) are set to \( n_r = 1.5 \), \( \delta_r = 0.9 \) in all problems.
However, when the search terminates without finding a solution, we keep the discontinuity bound (almost) constant with \( \delta_r = 0.999 \) and only increment the number of primitives with a rate of \( n_r = 1.5 \).
The parameter \( \alpha \) in \cref{alg:dbAstar} is set to 0.5.
The initial number of motion primitives, discontinuity bound, and hyperparameters of the trajectory optimization are chosen per dynamical system.
For example, in all problem instances with \qsystem{Unicycle 1 v0}, we start with 100 primitives and a discontinuity \( \delta_0 = 0.3 \), and in all instances with \qsystem{Quadrotor v0}, we start with 2000 primitives and \( \delta_0 = 0.7 \).
The distance function \( d \) used to measure the distance between states in \ALGdb is a weighted Euclidean norm, which uses different weights for the position, orientation, and velocity components of the state (e.g., in the \textit{Unicycle 2} we use weights 1, 0.5, and 0.25, respectively).

The offline generation of a valid primitive takes a few seconds, from \SI{0.1}{s} for the \textit{Unicycle 1 v0} to \SI{6}{s} for a \textit{Quadrotor v0} using a single core of a laptop computer with a CPU {\small i7-1165G7@2.80GHz}, where most of the time is spent attempting to solve random two-value boundary problems, which cannot be solved with short trajectories.

The benchmarking infrastructure is written in Python, and all tuning parameters for our algorithms and baselines can be found in our open-source repository.

For nearest-neighbors computation in \ALGidbas, \ALGsst, and \ALGrrt, we use NIGH \cite{ichnowski2015fast} in single-core mode, which provides faster lookup and insertion times than the default Geometric Near-neighbor Access Tree (GNAT) implementation in OMPL.
All trajectory optimization algorithms are implemented based on the DDP solver in Crocoddyl \cite{Crocoddyl}.

\subsection{Metrics}

In the following, we report the metrics:
\begin{enumerate}
	\item Success Rate (\(p\)): The ratio of trials where a solution was found within the planning budget of \SI{120}{s}.
	\item Median time required to find a solution (\(t^{\text{st}}\)).
	\item Median cost when 50\% of the trials have found a solution (\(J^{\text{st}}\)).
	\item Median cost of the final solution (\(J^{\text{f}}\)) found within the planning budget of
	      \SI{120}{s}.
\end{enumerate}

\subsection{Benchmark}

\begin{figure}
	\setlength{\tabcolsep}{0.0em}
	\centering
	\begin{tabular}{ccc}
		\includegraphics[width=0.32\linewidth]{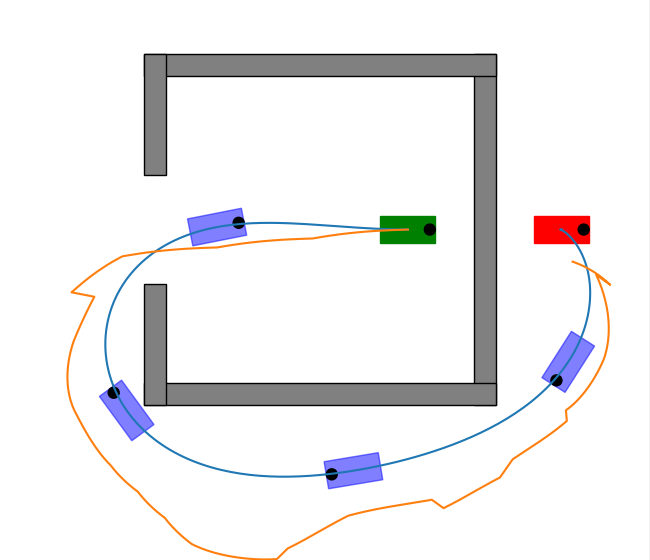} &
		\includegraphics[width=0.32\linewidth]{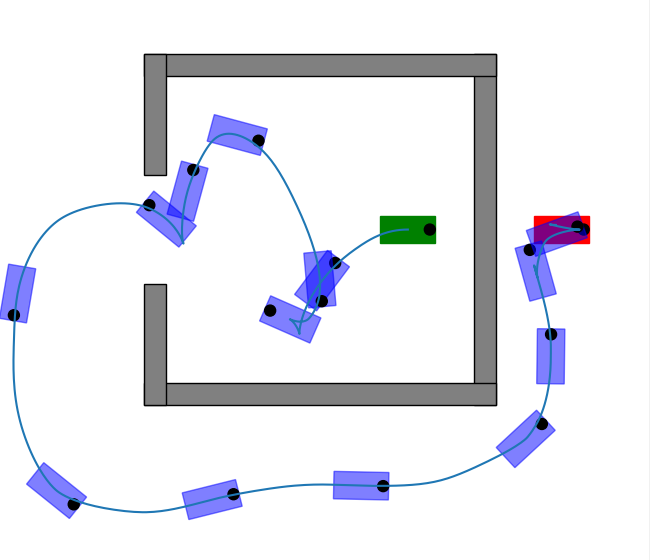}      &
		\includegraphics[width=0.32\linewidth]{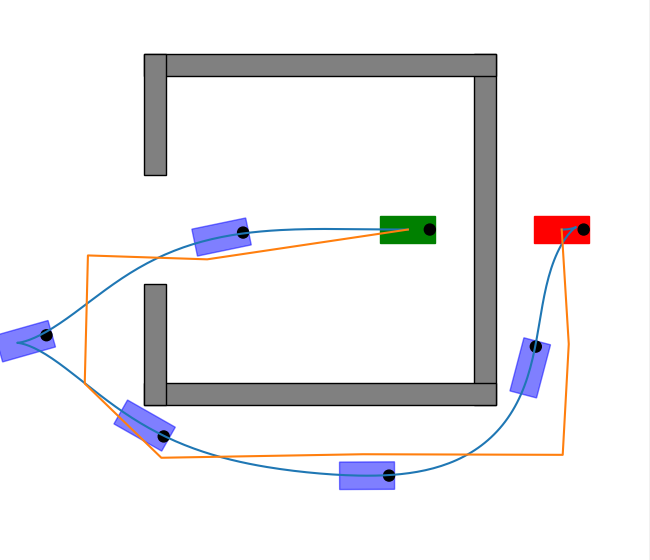}                                                                                  \\
		\small{\ALGidbas (\(J\)=\SI{25.9}{s})}                                        & \small{\ALGsst (\(J\)=\SI{75}{s})} & \small{\ALGrrt (\(J\)=\SI{30.9}{s})}
	\end{tabular}
	\caption{Example of the first trajectory computed by \ALGidbas, \ALGsst, and \ALGrrt in \emph{Unicycle 2 -- Bugtrap}.
		The orange line in \ALGidbas and \ALGrrt shows the initial guess trajectory before optimization.
		The first solution of \ALGidbas has the lowest cost (\(J\)).
		The average computational time to get the first solution is \SI{1.2}{s} in \ALGidbas, \SI{2.8}{s} in \ALGrrt, and \SI{0.9}{s} in \ALGsst.
	}
	\label{fig:unicycle2_bugtrap}
\end{figure}

\label{sec:benchmark}

We conducted our benchmark on a workstation with a CPU {\small AMD EPYC 7502 32-Core Processor @2.50 GHz}.
All planners use a single core.
Our results are summarized in \cref{tab:the-table-selected}, where we provide a selection of 16 problems (two for each dynamical system).
For brevity, \cref{tab:the-table-selected} does not include any standard deviations.
The complete results (43 problems) are available on the project webpage.
In general, we found that \texttt{SBPL} has almost no variance, \ALGsst has very high variance, and \ALGrrt and \ALGidbas are somewhere in between the two extremes.
The plots in \cref{fig:interesting} show the convergence behavior and the variance in three representative problems, which are discussed later.

We summarize the main results as follows:

$\bullet$ \texttt{SBPL} has been excluded from the table because it is only readily applicable to the three problems that use \emph{Unicycle 1 v0}.
In this setting, it consistently finds a solution in competitive time: \SI{2.1}{s} in \emph{Bugtrap}, \SI{0.2}{s} in \emph{Kink}, and \SI{0.1}{s} in \emph{Park}.

However, due to the limited number of lattice-based primitives, the initial and final costs are rather constant: \num{36.6} in \emph{Bugtrap}, \num{22.6} in \emph{Kink}, and \num{6.2} in \emph{Park} (and higher than the costs achieved by \ALGidbas).

\begin{table*}[ht]
	\caption{Benchmark with selected problems.
		\textbf{Bold} indicates the best result.}
	\centering
	\begin{tabular}{lllrrrrrrrrrrrr}
\toprule
& & &  \multicolumn{4}{c}{iDb-A*} & \multicolumn{4}{c}{SST*} & \multicolumn{4}{c}{RRT*+TO}\\
\cmidrule(lr){4-7}\cmidrule(lr){8-11}\cmidrule(lr){12-15}
 \# & System & Instance & $p$ & $t^{\textup{st}} [s]$ & $J^{\textup{st}}[s]$ & $J^{\textup{f}}[s]$ & $p$ & $t^{\textup{st}} [s]$ & $J^{\textup{st}}[s]$ & $J^{\textup{f}}[s]$ & $p$ & $t^{\textup{st}} [s]$ & $J^{\textup{st}}[s]$ & $J^{\textup{f}}[s]$\\
\midrule
0 & Acrobot & Swing up & \textbf{1.0} & \textbf{1.3} & \textbf{5.2} & 4.9 & 0.6 & 16.4 & 5.3 & 4.6 & 0.6 & 2.0 & 8.1 & \textbf{4.0}\\
1 & Acrobot & Swing up obstacles v1 & \textbf{1.0} & \textbf{1.7} & 6.4 & 5.1 & 0.4 & - & - & - & 0.9 & \textbf{1.7} & \textbf{6.1} & \textbf{3.9}\\
2 & Car with trailer & Kink & \textbf{1.0} & 1.6 & \textbf{26.9} & 24.4 & \textbf{1.0} & \textbf{0.6} & 75.8 & 63.1 & \textbf{1.0} & 2.0 & 44.8 & \textbf{18.1}\\
3 & Car with trailer & Park & \textbf{1.0} & 0.4 & 19.7 & \textbf{4.6} & 0.7 & 1.6 & 18.4 & 9.9 & 0.8 & \textbf{0.2} & \textbf{5.7} & 5.3\\
4 & Planar rotor & Hole & 0.7 & 34.2 & \textbf{3.8} & 3.8 & 0.9 & 39.8 & 13.5 & 8.0 & \textbf{1.0} & \textbf{1.1} & 8.1 & \textbf{3.4}\\
5 & Planar rotor & Bugtrap & \textbf{1.0} & 12.3 & \textbf{5.5} & \textbf{5.2} & \textbf{1.0} & 26.4 & 13.6 & 8.6 & \textbf{1.0} & \textbf{1.9} & 11.6 & 8.1\\
6 & Rotor pole & Swing up obstacles & \textbf{1.0} & \textbf{2.1} & \textbf{4.0} & \textbf{3.9} & 0.0 & - & - & - & 0.8 & 19.8 & 5.6 & 4.1\\
7 & Rotor pole & Small window & \textbf{1.0} & \textbf{3.2} & \textbf{4.5} & \textbf{4.5} & 0.0 & - & - & - & 0.5 & - & - & -\\
8 & Quadrotor v0 & Recovery & \textbf{1.0} & \textbf{0.9} & \textbf{6.2} & \textbf{2.6} & 0.0 & - & - & - & 0.0 & - & - & -\\
9 & Quadrotor v0 & Recovery obstacles & \textbf{1.0} & \textbf{1.5} & \textbf{5.3} & \textbf{3.6} & 0.0 & - & - & - & 0.9 & 8.8 & 6.9 & 3.6\\
10 & Quadrotor v1 & Obstacle & \textbf{1.0} & \textbf{1.2} & \textbf{2.7} & \textbf{2.4} & 0.2 & - & - & - & \textbf{1.0} & 1.5 & 4.4 & 3.5\\
11 & Quadrotor v1 & Window & \textbf{1.0} & \textbf{1.7} & \textbf{2.6} & \textbf{2.2} & 0.3 & - & - & - & 0.8 & 21.1 & 7.4 & 3.7\\
12 & Unicycle 1 v0 & Bugtrap & \textbf{1.0} & 0.5 & \textbf{22.3} & \textbf{21.0} & \textbf{1.0} & \textbf{0.1} & 70.5 & 23.8 & \textbf{1.0} & 1.2 & 45.1 & 23.8\\
13 & Unicycle 1 v2 & Wall & \textbf{1.0} & 0.8 & \textbf{20.8} & \textbf{18.4} & \textbf{1.0} & \textbf{0.1} & 49.7 & 19.2 & 0.1 & - & - & -\\
14 & Unicycle 2 & Bugtrap & \textbf{1.0} & 1.2 & \textbf{25.2} & \textbf{25.0} & \textbf{1.0} & \textbf{0.9} & 98.2 & 49.4 & \textbf{1.0} & 2.8 & 44.5 & 28.9\\
15 & Unicycle 2 & Park & \textbf{1.0} & \textbf{0.1} & \textbf{5.8} & \textbf{5.8} & \textbf{1.0} & 0.2 & 29.2 & 5.9 & 0.9 & \textbf{0.1} & 6.0 & \textbf{5.8}\\
\bottomrule
\end{tabular}

	\label{tab:the-table-selected}
\end{table*}

\begin{figure*}[ht]
	\centering

	\def \plott {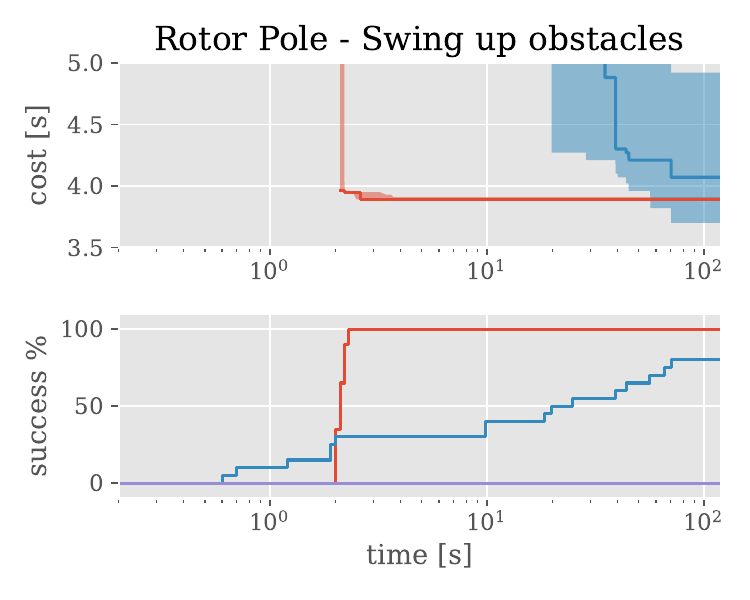}
	\includegraphics[scale=.46, page=1]{\plott}
	\includegraphics[scale=.46, page=2]{\plott}
	\includegraphics[scale=.46, page=3]{\plott}

	\caption{Success and cost convergence plots for three representative systems.
		Cost is only plotted when 50\% of the runs have found a solution.
		The shaded region indicates the 95\% non-parametric confidence interval for the median.
	}
	\label{fig:interesting}
\end{figure*}

$\bullet$ \ALGsst can find an initial solution quickly in problems with car-like dynamics; however, the quality of the initial solution is poor, especially in the larger, higher-dimensional systems.
The convergence is slow—our \SI{120}{s} timeout was not sufficient for \ALGsst to fully converge in most cases.
Notably, in problems involving multirotors, it was unable to find solutions within the time limit for most problems (for instance, we observed a 0\% success rate in the dynamics of \emph{Quadrotor v0} and 10\% with \emph{Quadrotor v1}).
Because \ALGsst relies on propagating random control inputs, it is very inefficient for multirotor systems, where random inputs quickly bring the system into unstable configurations.
However, in low-dimensional and stable dynamical systems like cars and unicycles, it finds the first solution faster but is clearly outperformed in terms of the cost of initial and final solutions by both \ALGidbas and \ALGrrt.

$\bullet$ \ALGrrt can find near-optimal initial solutions in some problems but fails if the geometric initial guess is not close to dynamically feasible motion, which occurs more often in environments with flying robots.
Thus, the main drawback is that this approach is incomplete, with success rates below 70\% on several problems and complete failures in others.
The cost at convergence is often worse than that of \ALGidbas.
In general, we conclude that \ALGrrt is a good method when the simplified model is informative and when the primary challenge is obstacle avoidance.
In these settings, it often matches the time to first solution of \ALGidbas.

$\bullet$ \ALGidbas finds the highest-quality first solution in 14 out of 16 selected problems (better in the \(J^{\text{st}}\) column), converged to the lowest-cost solution in 12 out of 16 problems (column \(J^{\text{f}}\)), and achieved a 100\% success rate in 15 out of 16 problems (column \(p\)) (and 41 out of 43 total problems).
The time to generate the first solution (column \(t^{\text{st}}\)) is competitive with the other approaches in the problems solved by all methods, while it is the only method that consistently solves all problems with multirotor flying robots.

We can conclude that our method performs well across all systems and environments, from navigation among obstacles with car models to recovery flights with control-limited quadrotors.
Note that the performance we report here is considerably better than our previous results~\cite{hoenig2022dbAstar}.
This improvement is due to an improved implementation of the search algorithm, a superior trajectory optimization algorithm and formulation, and a better strategy for choosing the number of primitives and the discontinuity bound.

\cref{fig:unicycle2_bugtrap} shows the different first solutions found by \ALGidbas, \ALGrrt, and \ALGsst in the \qsystem{Unicycle 2 -- Bugtrap} problem.
We also display the convergence and success plots for some instructive problems in \cref{fig:interesting}:

\begin{enumerate}

	\item \emph{Rotor pole -- Swing up obstacles}: This problem involves swinging the pole upwards while avoiding obstacles (\cref{fig:overview:pole}).
	      Only \ALGidbas consistently solves the problem within a competitive timeframe.
	      On average, \ALGrrt requires 10x more computational time to find a solution and does not achieve a 100\% success rate.
	      The disparate performance across runs of \ALGrrt stems from the uninformative geometric guess, which often leads to failure in the subsequent optimization and thus requires multiple trials with different initial guesses.
	      \ALGsst does not find any solution within the computational budget.

	\item \emph{Quadrotor v1 -- Window}: In this problem, the quadrotor needs to find a path through a window; see \cref{fig:dyno:quad3d}. \ALGsst achieves a low success rate because propagating random controls is often inefficient, with a low probability of generating useful trajectories. \ALGidbas consistently solves the problem in at most two seconds and improves the solution with more compute time. \ALGrrt fails to find a solution in some runs and the median cost is considerably higher.

	\item \emph{Unicycle 1 v0 -- Bugtrap}: The \emph{Bugtrap} environment is shown in \cref{fig:unicycle2_bugtrap}, but here we use the \emph{Unicycle 1 v0} system instead of \emph{Unicycle 2}.
	      \ALGsst finds the first solution the fastest, but convergence to the optimum is slow.
	      \ALGidbas is also quick and produces a high-quality solution already in the first iteration. \ALGrrt finds solutions as well but often requires multiple optimization trials to obtain the first valid solution (similar to the previous problems, \ALGrrt has a high variance in the time required to solve the problem).
	      Regarding \texttt{SBPL}, the time to reach the first solution is competitive with the other methods, but due to the limited number of primitives, the solution does not improve.
\end{enumerate}

\subsection{Ablation Studies of iDb-A*}
\label{sec:ablation}

We analyze the main algorithmic components of \ALGidbas to study the impact on the overall performance and to justify the most important design decisions, namely:

\begin{itemize}
	\item[--] Scheduling for increasing/decreasing the number of primitives and the discontinuity bound.
	\item[--] Euclidean heuristic in the search step.
	\item[--] Optimization with free terminal time using \textit{Free-dt}.
	\item[--] Optimization-based motion primitives.
	\item[--] Invariance/equivariance of motion primitives.
\end{itemize}

We find a significant interaction between hyperparameters and design decisions, making it challenging to evaluate the impact of each component.
For instance, the runtime of the search step with different heuristics is strongly dependent on the number of primitives.
Therefore, instead of including more variations of \ALGidbas in our benchmark, we choose to analyze and discuss each component individually, which we believe provides a clearer understanding.
The experiments in the ablation study are conducted on a modern laptop computer with a CPU {\small i7-1165G7@2.80GHz} (single core), which has single-core performance similar to that of the workstation used.

\subsubsection{Time Spent in Each Component}

We first evaluate how much computational time is spent in the search or optimization components, and how this varies when we decrease the discontinuity bound and increase the number of primitives.

\begin{figure}
	\centering
	\includegraphics[width=.49\textwidth]{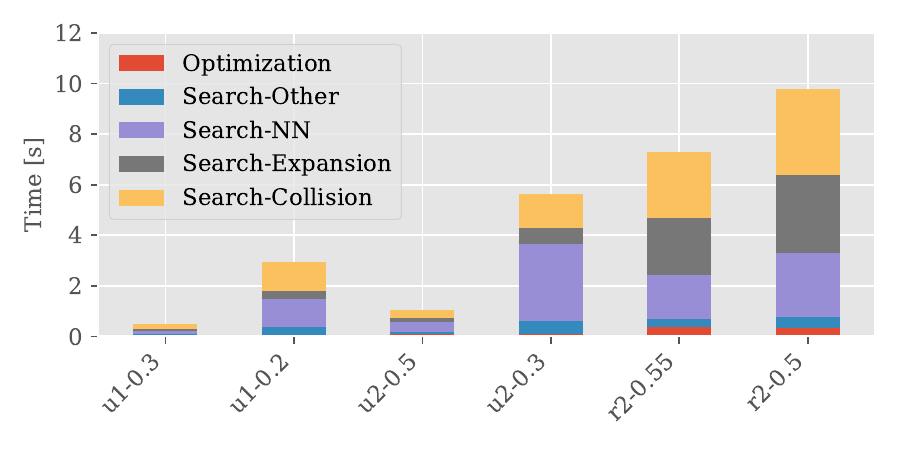}
	\caption{Time spent in the search and optimization steps during the first iteration of \ALGidbas in the \textit{Bugtrap} environment, with three different dynamical systems and two different discontinuity bounds.
		The labels \textit{u1}, \textit{u2}, and \textit{r2} are short names for the systems \textit{Unicycle 1 v0}, \textit{Unicycle 2}, and \textit{Planar rotor}.
		The number after the hyphen indicates the discontinuity bound; for example, \emph{u1-0.3} is \textit{Unicycle 1 v0} with $\delta=0.3$.
	}
	\label{fig:time-spent-each-component}
\end{figure}

\cref{fig:time-spent-each-component} shows an analysis of the computation time spent in the search and optimization during the first iteration of \ALGidbas for three different systems (\emph{Unicycle 1 v0}, \emph{Unicycle 2}, and \textit{Planar rotor}), using two different values of the discontinuity bound (with a consistent number of motion primitives), in the \textit{Bugtrap} environment (e.g., \cref{fig:unicycle2_bugtrap}).

The total time is predominantly occupied by the search component, where both collision checks and nearest-neighbor queries consume a significant fraction of the computational time.
When comparing different values of discontinuity bounds (e.g., \emph{u1-0.3} versus \emph{u1-0.2}), we observe that the search is quicker with a larger discontinuity (and a consistently small number of primitives) because it results in a smaller branching factor and fewer states to expand.
The computational time of the optimization component remains roughly constant, even when smaller discontinuity bounds provide a better initial guess.

Comparing across systems (e.g., \emph{u1-0.3} versus \emph{r2-0.5}), the time spent in the search increases with the dimensionality of the system's state space, in line with the theoretical exponential complexity.
The optimization time also increases, but it remains a minor fraction of the total time.

This analysis offers valuable insights for selecting the initial discontinuity bound and the number of primitives in \ALGidbas.
Ideally, the initial number of primitives and the discontinuity bounds should be chosen to be small (primitives) and large (discontinuity) so that 1) the search finds a discontinuous solution quickly, and 2) the optimizer finds a feasible solution.
Thus, our recommendation is to ``choose the largest discontinuity bound that the trajectory optimization can handle effectively''.

\subsubsection{Analysis of Trajectory Optimization with Free Terminal Time}

We evaluate the four trajectory optimization approaches discussed in Sec.
\ref{sec:optimization}: \emph{(i)} joint trajectory and time optimization (\textit{Free-dt}), \emph{(ii)}
hierarchical time search (\textit{Search-T}), \emph{(iii)} model predictive control (\textit{MPC}), and \emph{(iv)} model predictive contouring control (\textit{MPCC}).

We analyze the computational speed, the success, and the cost value in a set of initial guesses of different discontinuities for
some representative problems with different dynamical systems.
A summary of the results is shown in \cref{fig:analysis-timeopt} (more extensive results are available on the project webpage).

\begin{figure}[t]
	\centering
	\includegraphics[width=.40\textwidth]{
		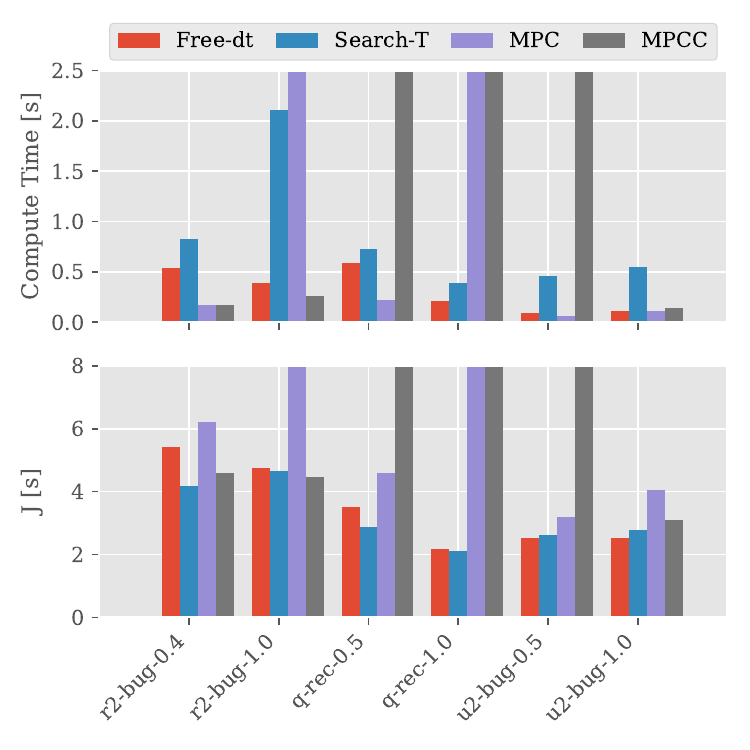}
	\caption{Analysis of four different strategies for trajectory optimization with free terminal time with six different initial guesses (two per problem).
		The label \textit{r2-bug} is short for \textit{Planar rotor -- Bugtrap}, the label \textit{q-rec} is short for \textit{Quadrotor v0 -- Recovery}, and the label \textit{u2-bug} is short for \textit{Unicycle 2 -- Bugtrap}.
		The number after the hyphen indicates the discontinuity bound of each initial guess.
		A bar reaching the top of the plot indicates the algorithm's failure to find a solution.
	}
	\label{fig:analysis-timeopt}
	%
\end{figure}

First, we note that the results highlight a strong variation across the dynamical systems, scenarios, and initial guesses.
We can draw the following general conclusions:

\begin{enumerate}
	\item \emph{Robustness}: \textit{Free-dt} and \textit{Search-T} are more robust and are able to successfully optimize more initial guesses than \textit{MPC} and \textit{MPCC}.
	      On the contrary, \textit{MPC} and \textit{MPCC} are harder to tune and sometimes fail, especially for larger values of the discontinuity bound.
	      The sliding window approaches repair the trajectory locally, step by step, and often cannot reach the final goal if doing so requires jointly improving the initial guess trajectory (where we need to propagate information about the goal across the entire trajectory).
	\item \emph{Computation speed}: When \textit{MPC} and \textit{MPCC} manage to find a solution, they are the fastest methods.
	      Comparing \textit{Free-dt} and \textit{Search-T}, \textit{Free-dt} is between 1.5 and 5 times faster than \textit{Search-T}.
	\item \emph{Cost convergence}: The joint approaches \textit{Free-dt} and \textit{Search-T} converge to a better cost than the sliding window approaches because they consider the full trajectory at once, but the difference is small.
\end{enumerate}

It is worth noting that compute times for the same problem with different discontinuity bounds are not directly comparable since the initial guesses contain a varying number of steps.
Furthermore, these results heavily depend on the underlying optimization algorithm (differential dynamic programming), which effectively addresses the temporal dimension of the trajectory optimization problem (with linear complexity on the number of time steps).

In \ALGidbas, the search component consumes more computational time than the optimization process, as illustrated in \cref{fig:time-spent-each-component}.
In trajectory optimization, we prioritize achieving reliable results and converging to low-cost solutions over computational efficiency.
Consequently, \textit{Free-dt} has been selected as the default optimization algorithm for \ALGidbas, striking an optimal balance between convergence, robustness, and computational speed.

\subsubsection{Analysis of Heuristic Functions}

\begin{figure}[t]
	\centering
	\includegraphics[width=.40\textwidth]{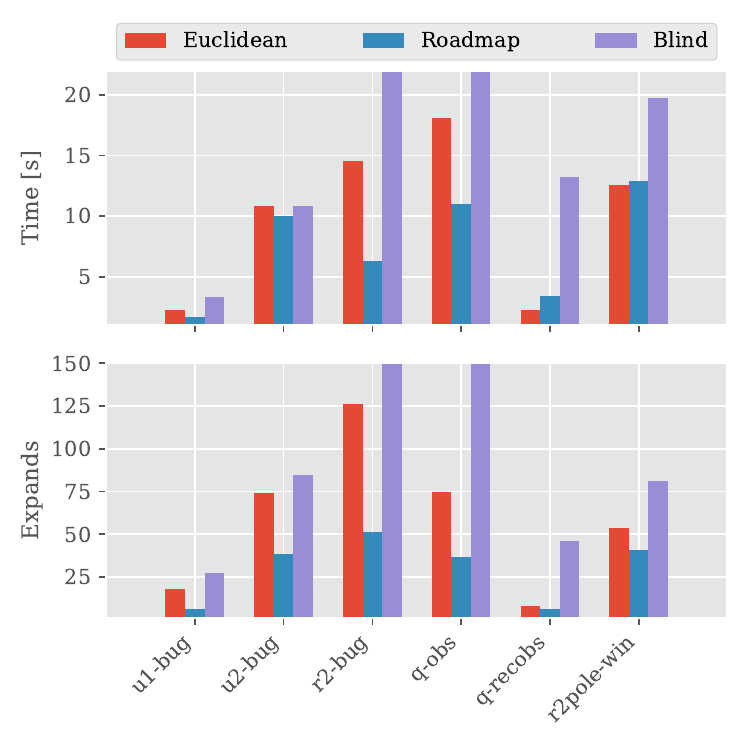}
	\caption{Analysis of the heuristic function in Db-A* across six different problems:
		\textit{Unicycle 1 v0 -- Bugtrap} (\textit{u1-bug}), \textit{Unicycle 2 -- Bugtrap} (\textit{u2-bug}), \textit{Planar rotor -- Bugtrap} (\textit{r2-bug}), \textit{Quadrotor v0 -- Obstacle} (\textit{q-obs}), \textit{Quadrotor v0 -- Recovery obstacles} (\textit{q-recobs}), and \textit{Rotor pole -- Window} (\textit{r2pole-win}).
		The number of expansions is measured in thousands of units; for example, a value of 100 on the graph represents 100,000 expansions.
		A bar reaching the top of the plot indicates the algorithm's failure to find a solution.
	}

	\label{fig:analysis-heu}
\end{figure}

We analyze three different heuristic functions to inform the search in \ALGdb: the Euclidean heuristic (\textit{Euclidean}), the roadmap heuristic (\textit{Roadmap}), and the blind heuristic (\textit{Blind}) (see \cref{sec:Discontinuity-Bounded-Search}).

We report the number of expanded nodes and the computational time for a single search of \ALGdb on six different problems involving obstacles in \cref{fig:analysis-heu} (with more extensive results on our webpage).
We first note that the comparison is highly dependent on the number of motion primitives and the discontinuity bound.
For a large discontinuity bound and a small number of primitives, the search's branching factor is small, and the heuristic is relatively irrelevant; even uninformed exploration can find the goal with few expansions.
With an increased number of primitives, the potential states to expand grow, necessitating a good heuristic function.

To better illustrate the differences between heuristics, we evaluate \ALGdb with smaller discontinuity values (and correspondingly higher numbers of primitives) than those used in the first iteration of \ALGidbas in the main benchmark, resulting in many more expansions and computational time.

The \textit{Roadmap} heuristic does not include precomputation time, which requires building a coarse roadmap and computing the optimal cost-to-go for all nodes.
This process takes approximately \SI{50}{ms} for the unicycles, \SI{70}{ms} for the \textit{Planar rotor} (with roadmaps of 1,000 configurations), and \SI{400}{ms} for the \textit{Quadrotor v0} (with a roadmap of 3,000 configurations).

We observe that the \textit{Roadmap} heuristic is the most informative (as it considers obstacles) and reduces the number of expanded nodes in all selected problems.
However, in some cases, the \textit{Euclidean} heuristic is competitive in terms of compute time (and even faster for some problems), as it is quicker to evaluate because the \textit{Roadmap} requires a k-d tree search with every evaluation.
The \textit{Blind} heuristic fails to solve some problems within the time limit of \SI{30}{s}.
Based on these results, in our algorithm, we select the \textit{Euclidean} heuristic because it requires no precomputation or additional hyperparameters, and it is informative and rapid for all dynamical systems.

\subsubsection{Motion Primitives - Optimization vs.
	Sampling}

We also evaluated the performance of \ALGidbas using motion primitives with random controls (i.e., short trajectories generated by sampling controls uniformly at random within bounds), as opposed to our optimization-based motion primitives.

For car-like robots, the success, convergence, and computation time of \ALGidbas are similar with both strategies.
The cost of the discontinuity-bounded trajectories produced in the search step using primitives with random controls is often higher, but the optimization step successfully improves these trajectories, achieving a similar cost in the final feasible trajectory.
Conversely, motion primitives with random controls often fail to solve problems with flying robots, where such primitives frequently result in unstable final intermediate configurations and erratic trajectories (as exemplified by the performance of \ALGsst with flying robots in the benchmark).

\subsubsection{Motion Primitives -- Invariance and Equivariance}

All systems, except for the \textit{Acrobot}, exhibit a form of invariance or equivariance.
Using invariance, the same primitive can be transformed on-the-fly (e.g., translated) to be applicable in different states with smaller discontinuity values (\cref{sec:inv-equi-search}).
For car-like robots, we leverage only translation invariance (\cref{ex:unicycle:invariance}).
For flying robots, we utilize translation and linear velocity invariance in the second-order dynamics (\cref{ex:quad:invariance}).
Since the number of motion primitives required to attain a given coverage resolution grows exponentially with each dimension, exploiting invariance is crucial for planning with a reduced number of primitives.
For instance, we use only 2,000 primitives for the \qsystem{Quadrotor v0} in the first iteration of \ALGidbas by utilizing translation and velocity invariance.
With only translation invariance, over 50,000 primitives would be required to achieve comparable levels of discontinuity.

\subsection{Limitations and Future Work}

From a practical standpoint, the main limitation of \ALGidbas is that it necessitates a preliminary offline step to generate motion primitives.
Moreover, adding a new dynamical system requires solid theoretical and practical knowledge of two different paradigms in motion planning: search and trajectory optimization, and the interplay between them to choose some important hyperparameters.
In this sense, the method is more complex than sample-based motion planners.

The number of required motion primitives grows exponentially with the state dimension.
To mitigate this issue, a possible solution is to use a more informative distance metric (instead of the weighted Euclidean metric) when deciding which primitive to apply, which correlates better with the underlying dynamics and the subsequent trajectory optimization.
Additionally, more informed sampling strategies for start and goal configurations when generating motion primitives could reduce the number of primitives needed.

To improve the computation time required to find the first solution in some problems (e.g., \SI{1.5}{s} in \textit{Quadrotor v0 - Recovery obstacles} or \SI{12.3}{s} in \textit{Planar rotor - Bugtrap}) and scale to larger environments with more obstacles, we see great potential in combining our discontinuity-based approach with an RRT-like planner, instead of an incremental A* search.
We are also interested in exploring hybrid approaches between our method and the control propagation used in \cite{shomeAsymptoticallyOptimalKinodynamic2021b}.

\section{Conclusion}

We present iDb-A*, a new kinodynamic motion planner that combines a novel graph-search method with trajectory optimization iteratively.
For the graph search, we introduce Db-A*, a generalization of A* that reuses motion primitives to compute trajectories with bounded discontinuity, which are later used as a warm start for trajectory optimization.

iDb-A* amalgamates the ideas and advantages of sampling-based, search-based, and optimization-based kinodynamic motion planners: it converges asymptotically to the optimal solution, rapidly finds a near-optimal solution, and does not require any additional post-processing.

We evaluate iDb-A* on a diverse set of challenging, time-optimal kinodynamic motion planning problems, from obstacle avoidance with car-like robots to highly dynamic maneuvers with quadcopters.
iDb-A* consistently outperforms other algorithms on these benchmarks and solves problems that were beyond the capabilities of previous motion planners.

The main limitation is that the number of motion primitives required grows exponentially with the state dimension, which poses a challenge to systems with higher dimensionality.

Finally, we believe that our combination of search, sampling, and optimization lays the foundation for novel kinodynamic planners for robotic manipulation and contact planning.

\bibliographystyle{IEEEtran}
\bibliography{IEEEabrv,references}

\end{document}